\newcommand{\Reals}{\mathbb{R}}
\newcommand{\abs}[1]{{\left|#1\right|}}
\newcommand{\set}[1]{\left\{#1\right\}}
\newcommand{\tuple}[1]{\left(#1\right)}
\newcommand{\range}[2]{\left[#1,\ #2\right]}
\newcommand{\lorange}[2]{\left(#1,\ #2\right]}
\newcommand{\fnc}[2]{#1\left(#2\right)}
\newcommand{\fncapp}[2]{\fnc{#1}{#2}}
\newcommand{\fncdef}[3]{#1:\ #2 \rightarrow #3}
\newcommand{\multiplicity}[1]{\mathbf{1}_{#1}}
\newcommand{\bigoh}[1]{\fncapp{\mathcal{O}}{#1}}
\newcommand{\wg}{\algname{WeightGen}}
\newcommand{\ProbSymbol}{P}
\newcommand{\Prob}[2][]{\fnc{\ProbSymbol_{#1}}{#2}}
\newcommand{\weight}{w}
\newcommand{\w}[1]{\fncapp{\weight}{#1}}
\newcommand{\wprob}[1]{\fncapp{P_{\weight}}{#1}}
\newcommand{\tilt}{r}
\newcommand{\bound}[1]{\hat{#1}}
\newcommand{\formula}{\mathbf{F}}
\newcommand{\assignment}{F}
\newcommand{\DB}{\mathcal{D}}
\newcommand{\freq}[2][]{freq^{#1}\left(#2\right)}
\newcommand{\ItemCnt}{M}
\newcommand{\DBSize}{N}
\newcommand{\minsup}{\theta}
\newcommand{\minlen}{\lambda}
\newcommand{\Patternlang}{\mathcal{L}}
\newcommand{\Items}{\mathcal{I}}
\newcommand{\Transactions}{\mathcal{T}}
\newcommand{\Item}{i}
\newcommand{\Transaction}{t}
\newcommand{\Pattern}{p}
\newcommand{\ItemVar}{I}
\newcommand{\TransVar}{T}
\newcommand{\TransVars}{\mathbf{\TransVars}}
\newcommand{\Qual}{\varphi}
\newcommand{\Constraints}{\mathcal{C}}
\newcommand{\cpfim}{\algname{cp4im}}
\newcommand{\eclat}{\algname{Eclat}}
\newcommand{\lcm}{\algname{lcm}}
\newcommand{\algname}[1]{\textsc{#1}}
\newcommand{\parhead}[1]{\smallskip

\noindent \emph{#1}}
\newcommand{\mulr}[2]{\multirow{#2}{*}{#1}}
\newcommand{\twor}[1]{\mulr{#1}{2}}
\newcommand{\cencol}[2]{\multicolumn{#2}{c}{#1}}
\newcommand{\algName}{\textsc{Flexics}}
\newcommand{\algNameGen}{\textsc{GFlexics}}
\newcommand{\algNameSpec}{\textsc{EFlexics}}
\newtheorem{theorem}{Theorem}
\begin{document}
\title{Flexible~constrained~sampling with~guarantees for~pattern~mining}

\author{%
	Vladimir Dzyuba\textsuperscript{1},
	Matthijs {van~Leeuwen}\textsuperscript{2},
	Luc De Raedt\textsuperscript{1}%
}

\date{}

\maketitle

{\let\thefootnote\relax\footnotetext{\textsuperscript{1} DTAI, KU Leuven, Belgium, \href{mailto:vladimir.dzyuba@cs.kuleuven.be,luc.deraedt@cs.kuleuven.be}{\texttt{firstname.lastname@cs.kuleuven.be}}}}

{\let\thefootnote\relax\footnotetext{\textsuperscript{2} LIACS, Leiden University, The Netherlands, \href{mailto:m.van.leeuwen@liacs.leidenuniv.nl}{\texttt{m.van.leeuwen@liacs.leidenuniv.nl}}}}
\section*{Abstract}

Pattern sampling has been proposed as a potential solution to the infamous pattern explosion. Instead of enumerating all patterns that satisfy the constraints, individual patterns are sampled proportional to a given quality measure. Several sampling algorithms have been proposed, but each of them has its limitations when it comes to 1) flexibility in terms of quality measures and constraints that can be used, and/or 2) guarantees with respect to sampling accuracy.

We therefore present \algName{}, the first flexible pattern sampler that supports a broad class of quality measures and constraints, while providing strong guarantees regarding sampling accuracy. To achieve this, we
leverage the perspective on pattern mining as a constraint satisfaction problem and build upon the latest advances in sampling solutions in SAT as well as existing pattern mining algorithms.
Furthermore, the proposed algorithm is applicable to a variety of pattern languages, which allows us to introduce and tackle the novel task of sampling sets of patterns.

We introduce and empirically evaluate two variants of \algName{}: 1) a generic variant that addresses the well-known itemset sampling task and the novel pattern set sampling task as well as a wide range of expressive constraints within these tasks, and 2) a specialized variant that exploits existing frequent itemset techniques to achieve substantial speed-ups. Experiments show that \algName{} is both accurate and efficient, making it a useful tool for pattern-based data exploration. 

\section{Introduction}%
\label{sec:introduction}

Pattern mining \cite{Agrawal1996} is an important and well-studied task in data mining. Informally, a pattern is a statement in a formal language that concisely describes a subset of a given dataset. Pattern mining techniques aim at providing comprehensible descriptions of coherent regions in the data. Many variations of pattern mining have been proposed in the literature, together with even more algorithms to efficiently mine the corresponding patterns. Best known is frequent pattern mining \cite{Aggarwal2014}, which includes frequent itemset mining and its extensions.

Traditional pattern mining methods enumerate all frequent patterns, though it is well-known that this usually results in humongous amounts of patterns (the infamous \emph{pattern explosion}). To make pattern mining more useful for exploratory purposes, different solutions to this problem have been proposed. Each of these solutions has its own advantages and disadvantages. \emph{Condensed representations} \cite{Calders2006} can often be efficiently mined, but generally still result in large numbers of patterns.  \emph{Top-$k$ mining} \cite{Zimmermann2014} is efficient but results in strongly related, redundant patterns showing a lack of diversity. \emph{Constrained mining} \cite{Nijssen2014} may result in too few or too many patterns, depending on the user-chosen constraints. \emph{Pattern set mining} \cite{Bringmann2010} takes into account the relationships between the patterns, which can result in small solution sets, but is computationally intensive.

In this paper, we study \emph{pattern sampling}, another approach that has been proposed recently: instead of enumerating all patterns, patterns are sampled one by one, according to a probability distribution that is proportional to a given quality measure. 
The promised benefits include: 1) flexibility in that potentially a broad range of quality measures and constraints can be used; 2) `anytime' data exploration, where a growing representative set of patterns can be generated and inspected at any time; 3) diversity in that the generated sets of patterns are independently sampled from different regions in the solution space. To be reliable, pattern samplers should provide theoretical guarantees regarding the sampling accuracy, i.e.,
the difference between the empirical probability of sampling a pattern and the (generally unknown) target probability determined by its quality. 
These properties are essential for pattern mining applications ranging from showing patterns directly to the user, where flexibility and the anytime property enable experimenting with and fine-tuning mining task formulations, to candidate generation for building pattern-based models, for which the approximation guarantees can be derived from those of the sampler.

While a number of pattern sampling approaches have been developed over the past years, they
are either inflexible (as they only support a limited number of quality measures and constraints), or do not provide theoretical guarantees concerning the sampling accuracy. 
At the algorithmic level, they follow standard sampling approaches such as Markov Chain Monte Carlo random walks over the pattern lattice \cite{Boley2009,Hasan2009,Boley2010}, or a special purpose sampling procedure tailored for a restricted set of itemset mining tasks \cite{Boley2011,Boley2012}.  
Although MCMC approaches are in principle applicable to a broad range of tasks, they often converge only slowly to the desired target distribution
and require the selection of the ``right'' proposal distributions. 

To the best of our knowledge, none of the existing approaches to pattern sampling takes advantage of the latest developments in sampling technology from the SAT-solving community, where a number of powerful samplers based on random hash functions and XOR-sampling have been developed \cite{Gomes2006,Chakraborty2013b,Ermon2013,Meel2016}. \wg{} \cite{Chakraborty2014}, one of the recent approaches,
possesses the benefits mentioned above: it is an anytime algorithm, it is flexible as it works with any distribution, 
it generates diverse solutions, and provides strong performance guarantees under reasonable assumptions.

In this paper, we show that the latest developments in sampling solutions in SAT are also relevant to pattern sampling
and essentially offer the same advantages. Our results build upon the view of pattern mining as constraint satisfaction, 
which is now commonly accepted in the data mining community  \cite{Guns2011}. 

\begin{table*}[t]

\newcommand{\headrow}[1]{{\scriptsize #1}}
\newcommand{\methodrow}[9]{#1 & #2 & \twor{#3} & \twor{#4} & #5 & \twor{#6} \\ #7 & #8 & & & #9 & \\[0.1cm]}

\caption{
Our method is
the first pattern sampler that
combines flexibility
with respect to
the choice of constraints and
sampling distributions with
strong theoretical guarantees.
}

\centering
\setlength{\tabcolsep}{5pt}

\begin{tabular}{lccccc}\toprule
\cencol{\twor{\headrow{Sampler}}}{1} & \headrow{Arbitrary}   & \headrow{Arbitrary}     & \headrow{Strong}     & \twor{\headrow{Efficiency}} & \headrow{Pattern set} \\
                                     & \headrow{constraints} & \headrow{distributions} & \headrow{guarantees} &                             & \headrow{sampling} \\
\midrule
\methodrow{\twor{ACFI \cite{Boley2009}}}{Minimal}{-}{-}{\twor{\checkmark}}{-}{}{frequency}{}
\methodrow{\twor{LRW \cite{Hasan2009}}}{\twor{\checkmark}}{\checkmark}{-}{{\scriptsize Implementation-}}{-}{}{}{{\scriptsize specific}}
\methodrow{\twor{FCA \cite{Boley2010}}}{Anti-/}{\checkmark}{-}{\twor{\checkmark}}{-}{}{monotonic}{}
\methodrow{TS (Two-step)}{\twor{-}}{-}{\checkmark}{\twor{\checkmark}}{-}{\cite{Boley2011,Boley2012}}{}{}
\midrule
\methodrow{\algName{}}{\twor{\algNameGen{}}}{\checkmark}{\checkmark}{\twor{\algNameSpec{}}}{\checkmark}{\small This paper}{}{}
\bottomrule
\end{tabular}

\label{table:comparison}

\end{table*} 
\parhead{Approach and contributions}
More specifically, 
we introduce \algName{}: a flexible pattern sampler that samples from distributions induced by a variety of pattern quality measures and allows for a broad range of constraints while still providing strong theoretical guarantees. Notably, \algName{} is, in principle, agnostic of the quality measure, as the sampler treats it as a black box. (However, its properties affect the efficiency of the algorithm.) The other building block is a \emph{constraint oracle} that enumerates all patterns that satisfy the constraints, i.e., a mining algorithm. The proposed approach allows converting an existing pattern mining algorithm into a sampler with guarantees. Thus, its flexibility is not limited by the choice of constraints and quality measures, but even allows tackling richer pattern languages, which we demonstrate by tackling the novel task of \emph{sampling sets of patterns}. \autoref{table:comparison} compares the proposed approach to alternative samplers; see Section~\ref{sec:relwork} for a more detailed discussion.

The main technical contribution of this paper consists of two variants of the \algName{} sampler, which are based on different constraint oracles. First, we introduce a generic variant, dubbed \algNameGen{}, that supports a wide range of pattern constraints, such as syntactic or redundancy-eliminating constraints. \algNameGen{} uses \cpfim{} \cite{Guns2011}, a declarative constraint programming-based mining system, as its oracle. Any constraint supported by \cpfim{} can be used without interfering with the umbrella procedure that performs the actual sampling task. Unlike the original version of \wg{} that is geared towards SAT, \algNameGen{} can handle cardinality constraints that are ubiquitous in pattern mining. Furthermore, we identify (based on previous research) the properties of the constraint satisfaction-based formalization of pattern mining that further improve the efficiency of the sampling procedure without affecting its guarantees and thus make it applicable to practical problems. We use \algNameGen{} to tackle a wide range of well-known itemset sampling tasks as well as the novel pattern set sampling task. Second, as it is well-known that generic solvers impose an overhead on runtime, we introduce a variant specialized towards frequent itemsets, dubbed \algNameSpec{}, which has an extended version of \eclat{} \cite{Zaki1997} at its core as oracle.

Experiments show that \algName{}' sampling accuracy is impressively high: in a variety of settings supported by the sampler, empirical frequencies are within a small factor of the target distribution induced by various quality measures. Furthermore, practical accuracy is substantially higher than theory guarantees. \algNameSpec{} is shown to be faster than its generic cousin, demonstrating that developing specialized solvers for specific tasks is beneficial when runtime is an issue. Finally, the flexibility of the sampler allows us to use the same approach to successfully tackle the novel problem of sampling pattern sets.
This demonstrates that \algName{} is a useful tool for pattern-based data exploration.

This paper is organized as follows.
We formally define the problem of pattern sampling in Section~\ref{sec:problem}. After reviewing related research in Section~\ref{sec:relwork}, we present the two key ingredients of the proposed approach in Section~\ref{sec:prelim}: 1) the perspective on pattern mining as a constraint satisfaction problem and 2) hashing-based sampling with \wg{}. In Section~\ref{sec:algorithm}, we present \algName{}, a flexible pattern sampler with guarantees. In particular, we outline the modifications required to adapt \wg{} to pattern sampling and describe the procedure to convert two existing mining algorithms into oracles suitable for use with \wg{}, which yields two
variants of \algName{}. In Section~\ref{sec:set-sampling}, we introduce the pattern set sampling task and describe how it can be tackled with \algName{}. We also outline sampling non-overlapping tilings, an example of pattern set sampling that is studied in the experiments. The experimental evaluation in Section~\ref{sec:experiments} investigates the accuracy, scalability, and flexibility of the proposed sampler. We discuss its potential applications, advantages, and limitations in Section~\ref{sec:discuss}. Finally, we present our conclusions in Section~\ref{sec:conclusion}.

\section{Problem definition}%
\label{sec:problem}

Here we present
a high-level definition of
the task that we consider
in this paper;
for concrete instances and examples, 
see Sections \ref{sec:prelim}
and \ref{sec:set-sampling}.
The pattern sampling problem
is formally defined as follows:
given a dataset $\DB$,
a pattern language $\Patternlang$,
a set of constraints $\Constraints$, and
a quality measure
$\fncdef{\Qual}{\Patternlang}{\Reals^+}$,
generate random patterns
that satisfy constraints 
in $\Constraints$
with probability proportional 
to their qualities:
\begin{align*}
\Prob[\Qual]{\Pattern} = \begin{dcases}
\fnc{\Qual}{\Pattern} / Z_{\Qual} & \text{if }\Pattern\in\Patternlang\text{ satisfies }\Constraints \\
0 & \text{otherwise} 
\end{dcases}
\end{align*}
where $Z_{\Qual}$ is
an (often unknown) 
normalization constant.

A quality measure quantifies
the domain-specific
interestingness of a pattern.
The choice of
a quality measure and constraints
allows a user to express 
her analysis requirements.
The sampling procedure 
meets these requirements by
satisfying the constraints and
generating high-quality patterns
more frequently.
Thus, sampled patterns are
a representative subset of 
all interesting regularities
in the dataset.

Pattern set mining is 
an extension of pattern mining, 
which considers sets of patterns 
rather than individual patterns.
Despite its popularity,
we are not aware of the existence of 
pattern set samplers. 
The task of pattern set sampling 
can easily be formalized 
as an extension of pattern sampling,
where we sample sets of patterns 
$s \subset \Patternlang$, 
and the constraints $\Constraints$ 
as well as the quality measure $\Qual$ 
are specified over sets of patterns 
(from $2^\Patternlang$)
rather than individual patterns 
(from $\Patternlang$).
\section{Related work} \label{sec:relwork}

We here focus on 
two classes of
related work, i.e.,
1) pattern mining as 
constraint satisfaction
and 2) pattern sampling.

\parhead{Constrained pattern mining}
The study of constraints
has been a prominent subfield
of pattern mining.
A wide range of constraint classes
were investigated, including
anti-monotonic constraints
\cite{Agrawal1996},
convertible constraints
\cite{Pei2000}, and others.
Another development 
of these ideas led to
the introduction of
global constraints that
concern multiple patterns and
to the emergence of 
\emph{pattern set mining}
\cite{Knobbe2006,DeRaedt2007}.
Furtheremore, generic mining systems
that could freely combine
various constraints were proposed
\cite{Bucila2003,Bonchi2009}.

These insights allowed to draw
a connection between 
pattern mining and 
constraint satisfaction in AI,
e.g., SAT or constraint programming (CP).
As a result, declarative mining systems,
which use generic constraint solvers
to mine patterns according
to a declarative specification
of the mining task, were proposed.
For example, CP was used to develop 
first declarative systems
for itemset mining \cite{Guns2011}
and pattern set mining
\cite{Khiari2010,Guns2013}.
Recently, declarative approaches 
have been extended to support
sequence mining \cite{Kemmar2014} and
graph mining \cite{Paramonov2015}.

Constraint-based systems allow 
a user to specify a wide range of 
pattern constraints and thus
provide tools to alleviate
the pattern explosion.
However, the underlying solvers 
use systematic search,
which affects the order
of pattern generation and thus 
prevents them from being used
in a truly anytime manner
due to low diversity of
consecutive solutions.
Similarly, pattern set miners 
that directly aim at obtaining
diverse result sets
typically incur prohibitive
computational costs as the size of 
the pattern space grows.

\parhead{Pattern sampling}
In this paper we focus 
on the approaches that 
directly aim at generating
random pattern collections
rather than the methods
whose goal is to estimate 
dataset or pattern language
statistics; cf. Shervashidze et al.
\cite{Shervashidze2009}.

\autoref{table:comparison}
compares our method with
the approaches described
in Section \ref{sec:introduction},
namely MCMC and \emph{two-step} samplers
\cite{Boley2011,Boley2012}.
We further break down MCMC samplers
into three groups: ACFI,
the very first uniform sampler
developed for approximate
counting of frequent itemsets
\cite{Boley2009};
LRW, a generic approach
based on random walks over
pattern lattice 
\cite{Hasan2009}; and 
FCA, a sampler, which uses
Markov chains
based on insights from 
formal concept analysis 
\cite{Boley2010}.

Although MCMC samplers
provide theoretical guarantees,
in practice, their convergence is
often slow and hard to diagnose.
Solutions such as long burn-in or
heuristic adaptations either
increase the runtime or
weaken the guarantees.
Furthermore, ACFI is tailored
for a single task;
FCA only supports 
anti-/monotone constraints; and
LRW checks constraints locally,
while building the neighborhood 
of a state, which might require 
advanced reasoning and
extensive caching.
Two-step samplers, while 
provably accurate and efficient,
only support 
a limited number of 
weight functions and
do not support constraints.
\section{Preliminaries} \label{sec:prelim}

We first outline 
itemset mining,
a prototypical 
pattern mining task, 
and formalize it as 
a CSP and then
describe \wg{}, a 
hashing-based 
sampling algorithm.

\subsection{Itemset mining}
\label{sec:prelim:cp4im}

Itemset mining is 
an instance of pattern mining
specialized for binary data.
Let $\Items=\set{1 \ldots \ItemCnt}$ 
denote a set of items.
A dataset $\DB$ is a bag of 
transactions over $\Items$,
where each transaction $\Transaction$ is 
a subset of $\Items$, i.e.,
$\Transaction \subseteq \Items$;
$\Transactions=\set{1 \ldots \DBSize}$ is
a set of transaction indices.
The pattern language $\Patternlang$ 
also consists of sets of items, 
i.e., $\Patternlang = 2^{\Items}$.
An itemset $\Pattern$ occurs in 
a transaction $\Transaction$, 
iff $\Pattern \subseteq \Transaction$.
The frequency of $\Pattern$ is
the number of transactions
in which it occurs:
$\freq{\Pattern} = \abs{%
	\set{\Transaction \in \DB \mid%
	\Pattern \subseteq \Transaction}%
}$.
In labeled datasets, 
a transaction
has a label from $\set{-,+}$;
$freq^{-,+}$ are defined accordingly.

We first give 
a brief overview of
the general approach to 
solving CSPs and then
present a formalization of
itemset mining as a CSP,
following that of 
\cpfim{} \cite{Guns2011}.
Formally, a CSP is comprised of
\emph{variables} along with
their \emph{domains} and 
\emph{constraints} over 
these variables.
The goal is to find
a solution, i.e.,
an assignment of values
to all variables that
satisfies all constraints.
Every constraint is implemented 
by a \emph{propagator}, i.e.,
an algorithm that
takes domains as input and
removes values that do not
satisfy the constraint.
Propagators are activated
when variable domains change, e.g.,
by the search mechanism 
or other propagators.
A CSP solver is typically
based on depth-first search.
After a variable is assigned a value, 
propagators are run
until domains cannot be 
reduced any further.
At this point, three cases are possible:
1) a variable has an empty domain, i.e.,
the current search branch has failed and
backtracking is necessary, 
2) there are unassigned variables, i.e.,
further branching is necessary, or
3) all variables are assigned a value,
i.e., a solution is found.

\begin{table}[t]

\caption{Constraint programming formulations of
common itemset mining constraints.
$\ItemVar_i = 1$ implies that
item $i$ is included in
the current (partial) solution, 
whereas $\TransVar_t = 1$
implies that it covers
transaction $t$.}

\vspace{2pt}

\newcommand{\constraintline}[9]{$#1$ & $#9$ & $\forall #2 \in #3$ & $#4$ & $#5$ & $\sum_{#6 \in #7}{#8}$}
\setlength{\tabcolsep}{10pt}
\centering \begin{tabular}{lll@{\hskip 4pt}l@{\hskip 4pt}c@{\hskip 4pt}l}\toprule
Constraint & Parameters & \multicolumn{4}{l}{CP formulation} \\
\midrule
\constraintline{coverage}              {t}{\Transactions}{\TransVar_t = 1}{\Leftrightarrow}{i}{\Items}       {\ItemVar_i\left(1 - \DB_{ti}\right) = 0}{} \\
\midrule
\constraintline{\fnc{minfreq}{\minsup}}{i}{\Items}       {\ItemVar_i = 1} {\Rightarrow}    {t}{\Transactions}{\TransVar_t\DB_{ti} \geq \minsup \times \abs{\DB}}{\minsup \in \lorange{0}{1}} \\
\constraintline{closed}                {i}{\Items}       {\ItemVar_i = 1} {\Leftrightarrow}{t}{\Transactions}{\TransVar_t\left(1 - \DB_{ti}\right) = 0}{} \\
\constraintline{\fnc{minlen}{\minlen}} {t}{\Transactions}{\TransVar_t = 1}{\Rightarrow}    {i}{\Items}       {\ItemVar_i\DB_{ti} \geq \minlen}{\minlen \in \range{1}{\ItemCnt}} \\
\bottomrule
\end{tabular}
\label{table:constraints}
\end{table}

Let $\ItemVar_i$ denote a variable 
corresponding to each item;
$\TransVar_t$ a variable 
corresponding to each transaction; and
${\DB}_{ti}$ a constant
that is equal to $1$, if
item $i$ occurs in transaction $t$,
and $0$ otherwise.
Variables $\ItemVar_i$ and 
$\TransVar_t$ are \emph{binary}, i.e.,
their domain is $\set{0,1}$.
Each CSP solution 
corresponds to a single itemset.
Thus, for example, 
$\ItemVar_i = 1$ implies that
item $i$ is included in
the current (partial) solution, 
whereas $\TransVar_t = 0$
implies that transaction $t$
is \emph{not} covered by it.
\autoref{table:constraints}
lists some of the 
most common constraints.
The $coverage$ constraint 
essentially models
a dataset query and ensures that
if the item variable assignment
corresponds to an itemset $\Pattern$,
only those transaction variables
that correspond to 
indices of transactions 
where $\Pattern$ occurs,
are assigned value $1$.
Other constraints allow users 
to remove uninteresting solutions,
e.g., redundant non-$closed$ itemsets.
Most solvers provide
facilities for enumerating
all solutions in sequence,
i.e., to enumerate all patterns.

In contrast to
hard constraints,
\emph{quality measures} 
are used to describe 
soft user preferences 
with respect to
interestingness of patterns.
Common quality measures 
concern frequency, 
e.g., $\Qual \equiv freq$,
discriminativity in 
a labeled dataset,
e.g., \emph{purity}
$\fnc{\Qual}{\Pattern}=\left. \max \set{%
	\freq[+]{\Pattern}, \freq[-]{\Pattern}%
} \middle/ \freq{\Pattern} \right.$,
etc.

\subsection{WeightGen}
\wg{} \cite{Chakraborty2014} is 
an algorithm for approximate 
weighted sampling of 
satisfying assignments (solutions)
of a Boolean formula
that only requires access to
an efficient constraint oracle that
enumerates the solutions, 
e.g., a SAT solver.
The core idea consists in
partitioning the solution space
into a number of ``cells'' and
sampling a solution from
a random cell.
Partitioning with desired properties
is obtained via augmenting
the original problem with
random XOR constraints.
Theoretical guarantees stem from 
the properties of
uniformly random 
XOR constraints.
The sequel follows
Sections 3-4 in 
Chakraborty et al.
\cite{Chakraborty2014}.

\parhead{Problem statement and guarantees}
Formally, let $\formula$ 
denote a Boolean formula;
$\assignment$ a satisfying
variable assignment of $\formula$;
$\ItemCnt$ the total number of variables;
$\w{\cdot}$ a black-box weight function
that for each $\assignment$
returns a number in $\lorange{0}{1}$;
and $\weight_{min}$ (resp. $\weight_{max}$)
the minimal (resp. maximal) weight over
all satisfying assignments of $\formula$.
The weight function induces 
the probability distribution
over satisfying assignments of $\formula$,
where $\wprob{\assignment} = \left.%
	\w{\assignment} \middle/ \sum{\w{\assignment'}}%
\right.$.
Quantity $\tilt = \left.%
	\weight_{max} \middle/ \weight_{min}%
\right.$
is the (possibly unknown) 
\emph{tilt} of the distribution
$\ProbSymbol_{\weight}$.

Given a user-provided 
upper bound on tilt 
$\bound{\tilt} \geq \tilt$ and 
a desired sampling error tolerance
$\kappa \in \tuple{0,\ 1}$
(the lower $\kappa$, the tighter
the bounds on the sampling error), 
\algname{WeightGen} generates
a random solution $\assignment$.
Performance guarantees concern
both accuracy and efficiency of
the algorithm and
depend on the parameters and
the number of variables $\ItemCnt$;
see Section~\ref{sec:algorithm} for details.

\parhead{Algorithm}
Recall that the core idea
that underlies sampling 
with guarantees is
partitioning the overall solution space
into a number of random cells
by adding random XOR constraints.
\wg{} proceeds in two phases:
1) the estimation phase and 
2) the sampling phase.
The goal of the \emph{estimation phase} is
to estimate the number of
XOR constraints necessary
to obtain a ``small'' cell,
where the required cell weight
is determined by the desired 
sampling error tolerance.

The \emph{sampling phase} 
starts with applying
the estimated number 
of XOR constraints.
If it obtains a cell
whose total weight lies
within a certain range, 
which depends on $\kappa$,
a solution is sampled exactly
from all solutions in the cell;
otherwise, it adds a new
random XOR constraint.
However, the number of 
XOR constraints that 
can be added is limited.
If the algorithm cannot obtain
a suitable cell,
it indicates failure and
returns no sample.

Both phases make use of
a \emph{bounded} oracle that
terminates as soon as
the total weight of 
enumerated solutions 
exceeds a predefined number.
It enumerates solutions of
the original problem $\formula$
augmented with the XOR constraints.
An individual XOR constraint
over variables $\mathbf{X}$ has the form 
$\bigotimes{b_i \cdot X_i} = b_0$, 
where $b_{0 \mid i} \in \set{0,1}$.
The coefficients $b_i$ determine
the variables involved
in the constraint, whereas
the \emph{parity bit} $b_0$ determines
whether an even or an odd number of
variables must be set to $1$.
Together, $m$ XOR constraints 
identify one cell belonging
to a partitioning of
the overall solution space
into $2^m$ cells.

The core operation of \wg{}
involves drawing 
coefficients uniformly at random,
which induces a random 
partitioning of
the solution space 
that satisfies the 
\emph{$3$-wise independence property}, 
i.e., knowing the cells for
two arbitrary assignments
does not provide any information
about the cell for
a third assignment \cite{Gomes2006}.
This ensures desired
statistical properties of
random partitions, required for 
the theoretical guarantees.
The reader interested in 
further technical details 
should consult 
Appendix \ref{sec:appendix:wg}
and Chakraborty et al.
\cite{Chakraborty2014}.
\section{Flexics: Flexible pattern sampler with guarantees}
\label{sec:algorithm}

In this paper,
we propose \algName{},
a pattern sampler
that uses \wg{} as 
the umbrella sampling procedure.
To this end, we 1) extend it
to CSPs with binary variables,
a class of problems that is
more general than SAT and that
includes pattern mining
as described in Section \ref{sec:prelim};
2) augment existing 
pattern mining algorithms
for use with \wg{}; and
3) investigate the properties of
pattern quality measures
in the context of 
\wg{}'s requirements.

\wg{} was originally presented
as an algorithm to sample
solutions of the SAT problem.
Pattern mining problems
cannot be efficiently tackled by 
pure Boolean solvers 
due to the prominence of 
cardinality constraints 
(e.g., $minfreq$).
However, we observe that 
the core sampling procedure
is applicable to any CSP 
with binary variables,
as its solution space
can be partitioned with
XOR constraints 
in the required manner.

Based on this insight,
we present two variants of 
\algName{} that differ 
in their oracles.
Each oracle is essentially 
a pattern mining algorithm
extended to support 
XOR constraints along with
common constraints on patterns.
The first one, 
dubbed \algNameGen{}, 
builds upon the generic
formalization and 
solving techniques 
described in Section~%
\ref{sec:prelim} 
and thus supports 
a wide range of constraints.
Owing to the properties of
the $coverage$ constraint,
XOR constraints only need
to involve item variables\footnote{%
In other words, item variables $\ItemVar$
are the \emph{independent support}
of a pattern mining CSP.},
which makes them relatively short,
mitigating the computational overhead.
Moreover, this perspective helps us
design the second approach,
dubbed \algNameSpec{},
which uses an extension of
\eclat{} \cite{Zaki1997},
a well-known mining algorithm,
as an oracle.
It is tailored for a single task
(frequent itemset mining,
i.e., it only supports 
the $minfreq$ constraint),
but is capable of 
handling larger datasets.
We describe each oracle in detail 
in the following subsections.

Given a dataset $\DB$,
constraints $\Constraints$,
a quality measure $\Qual$,
and the error tolerance 
parameter $\kappa \in \tuple{0,\ 1}$,
\algName{} first constructs a CSP 
corresponding to the task of 
mining patterns satisfying 
$\Constraints$ from $\DB$.
It then determines parameters for
the sampling procedure, 
including the appropriate
number of XOR constraints, and
starts generating samples.
To this end, it uses one of 
the two proposed oracles
to enumerate patterns
that satisfy $\Constraints$
\emph{and} random XOR constraints.
Both variants of \algName{}
support sampling from
black-box distributions derived 
from quality measures and, 
most importantly, preserve 
the theoretical guarantees of \wg{}%
\footnote{\autoref{th:accuracy}
corresponds to and
follows from Theorem 3
of Chakraborty et al.
\cite{Chakraborty2014}.}:

\begin{theorem}
The probability 
that \algName{} samples
a random pattern $\Pattern$ that 
satisfies constraints $\Constraints$
from a dataset $\DB$,
lies within a bounded range
determined by the quality of
the pattern $\fnc{\Qual}{\Pattern}$ 
and $\kappa$:
\begin{align*}
\frac{\fnc{\Qual}{\Pattern}}{Z_{\Qual}} \times%
		\frac{1}{1 + \fnc{\varepsilon}{\kappa}} \leq%
	\Prob{\fnc{\text{\algName{}}}{\DB, \Constraints, \Qual; \kappa} = \Pattern} \leq
\frac{\fnc{\Qual}{\Pattern}}{Z_{\Qual}} \times%
		\left(1 + \fnc{\varepsilon}{\kappa}\right)
\end{align*}
\label{th:accuracy}
\end{theorem}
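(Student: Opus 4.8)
The plan is to prove \autoref{th:accuracy} by exhibiting \algName{} as a faithful instance of \wg{} applied to the CSP encoding of the pattern mining task, so that the bound becomes a direct corollary of Theorem 3 of Chakraborty et al.\ \cite{Chakraborty2014}. The first step is to fix the correspondence between the two settings. Following Section~\ref{sec:prelim}, I would encode the task of mining patterns from $\DB$ that satisfy $\Constraints$ as a CSP over the binary variables $\ItemVar_i$ and $\TransVar_t$, and observe that its set of solutions is in bijection with the set of patterns $\Pattern \in \Patternlang$ satisfying $\Constraints$. Under this bijection the quality measure $\Qual$ plays the role of \wg{}'s weight function: after rescaling $\Qual$ by $1/\weight_{max}$ so that every value lands in $\lorange{0}{1}$ as \wg{} requires (a rescaling that leaves the target distribution $\Qual(\Pattern)/Z_{\Qual}$ unchanged, since $Z_{\Qual}$ rescales identically), the induced distribution $\wprob{\cdot}$ coincides exactly with $\Prob[\Qual]{\Pattern}$, and the tilt $\tilt = \weight_{max}/\weight_{min}$ is bounded by the supplied $\bound{\tilt}$.

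The second step is to verify that the partitioning machinery transfers. I would argue that because the variables are binary, drawing uniformly random XOR constraints over them induces exactly the same $3$-wise independent random partitioning of the solution space that underlies \wg{}'s guarantees, so no property of the Boolean (SAT) setting beyond binary-valuedness is actually used. A point that needs care here is that in \algName{} the XOR constraints range only over the item variables $\ItemVar$ rather than over all variables. I would justify this via the $coverage$ constraint: fixing an assignment to $\ItemVar$ determines the values of all $\TransVar_t$, so $\ItemVar$ is an independent support of the CSP and each solution is uniquely identified by its projection onto $\ItemVar$. Hence a partition defined by XORs over $\ItemVar$ alone induces the same partition of the solution set, preserving the required cell-weight and independence properties.

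The third step is to confirm that each of the two oracles is a correct \emph{bounded} constraint oracle for the augmented CSP, i.e.\ that \algNameGen{} (via \cpfim{}) and \algNameSpec{} (via the extended \eclat{}) enumerate precisely the solutions of the original problem conjoined with the current XOR constraints, and terminate once the accumulated weight crosses the prescribed threshold. Since \wg{} treats the oracle as a black box and its analysis depends only on this enumeration contract, establishing it lets the estimation and sampling phases run exactly as analysed in \cite{Chakraborty2014}. With the correspondence of solutions, weights, tilt bound, and partitioning in place, Theorem 3 of Chakraborty et al.\ applies verbatim with the same error function $\fnc{\varepsilon}{\kappa}$, yielding the two-sided bound in the statement.

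I expect the main obstacle to be the independent-support argument: one must check not merely that $\ItemVar$ determines a unique solution, but that restricting the random XORs to $\ItemVar$ does not degrade the statistical properties (near-uniform cell assignment and $3$-wise independence) on which Theorem 3 depends. The weight rescaling and the oracle-correctness steps are essentially bookkeeping once the encoding is fixed, whereas ensuring that the reduced XOR width still yields a partitioning to which the unmodified \wg{} analysis applies is where the genuine content lies.
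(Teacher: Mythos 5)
Your proposal follows essentially the same route as the paper's proof: both reduce \algName{} to \wg{} run on the CSP encoding of the mining task, with the quality measure rescaled into $\lorange{0}{1}$ serving as the black-box weight function, and then invoke Theorem 3 of Chakraborty et al.\ \cite{Chakraborty2014} to obtain the two-sided bound. The extra points you verify explicitly (that the item variables form an independent support so XORs over them suffice, and that the two oracles satisfy the bounded-enumeration contract) are the same facts the paper establishes outside the proof itself in Section~\ref{sec:algorithm}, so your argument is a more detailed rendering of the identical reduction.
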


\begin{proof}
Theorem 3 of Chakraborty et al. \cite{Chakraborty2014} states:
\begin{align*}
\left.\wprob{\assignment} \middle/ \left(1 + \fnc{\varepsilon}{\kappa}\right)\right. \leq%
	\bound{\ProbSymbol}_{\assignment}%
\leq \wprob{\assignment} \times \left(1 + \fnc{\varepsilon}{\kappa}\right)
\end{align*}
\noindent where 
$\bound{\ProbSymbol}_{\assignment}$
denotes the probability that \wg{}
called with parameters 
$\bound{\tilt}$ and $\kappa$
samples the solution $\assignment$, 
$\wprob{\assignment} \propto \w{\assignment}$ 
denotes the target probability 
of $\assignment$, and
$\fnc{\varepsilon}{\kappa} = %
	\left(1+\kappa\right)
	\left(2.36+0.51/\left(1 - \kappa\right)^2\right) - 1$
denotes sampling error
derived from $\kappa$.

For technical purposes,
we introduce the notion of 
the \emph{weight} of a pattern as
its quality scaled to the range 
$\lorange{0}{1}$, i.e.,
$\fnc{\weight_{\Qual}}{\Pattern} = \fnc{\Qual}{\Pattern}/C$,
where $C$ is 
an arbitrary constant
such that $C \geq \max%
	\limits_{\Pattern \in\Patternlang}%
	\fnc{\Qual}{\Pattern}$.
The proof follows from
Theorem~3 of Chakraborty et al.
\cite{Chakraborty2014}
and the observation that
$\fnc{\text{\algName{}}}{\DB, \Constraints, \Qual; \kappa}$
is equivalent to
$\fnc{\text{\wg{}}}{\fnc{\text{CSP}}{\DB, \Constraints}, \weight_{\Qual}; \kappa}$.
The estimation phase
effectively corrects
for potential discrepancy
between $C$ and $Z_{\Qual}$.
\qed
\end{proof}

Furthermore, Theorem 4
of Chakraborty et al.
\cite{Chakraborty2014},
provides \emph{efficiency guarantees}:
the number of calls to 
the oracle is linear in 
$\bound{\tilt}$ and
polynomial in $\ItemCnt$ and 
$1 / \fnc{\varepsilon}{\kappa}$.
The assumption
that the tilt is
\emph{bounded from above}
by a reasonably low number is
the only assumption regarding 
a (black-box) weight function.
Moreover, it only affects 
the efficiency of
the algorithm, but
not its accuracy.

Thus, using a quality measure
with \algName{} requires
knowledge of two properties:
scaling constant $C$ and
tilt bound $\bound{\tilt}$.
In practice, both are
fairly easy to come up with for
a variety of measures.
For example, for $freq$ and $purity$,
$C = \abs{\DB}$,
$\bound{\tilt} = \minsup^{-1}$ and
$C = 1$, $\bound{\tilt} = 2$
respectively;
see Section~\ref{sec:set-sampling}
for another example.

\subsection{GFlexics: Generic pattern sampler}

The first variant relies 
on \cpfim{} \cite{Guns2011}, 
a constraint programming-based 
mining system.
A wide range of constraints
supported by \cpfim{}
are automatically 
supported by the sampler and
can be freely combined
with various quality measures.

\begin{figure*}[t]
\newcommand{\geGap}{\hskip 15pt}
\newcommand{\geCaption}[3]{%
	\multicolumn{#1}{@{} c @{\geGap}}{\pbox[t]{#2}{\centering #3}}%
}
\newcommand{\geCaptionLast}[3]{%
	\multicolumn{#1}{@{} c @{}}{\pbox[t]{#2}{\centering #3}}%
}

\setlength{\tabcolsep}{2pt}%
\begin{tabular}{l @{\geGap} ccccc|c @{\geGap} cccccc|c @{\geGap} ccccc|c @{\geGap} cccccc|c}
\multicolumn{20}{c}{}                                                                                                                                                 &               & $\downarrow$ &   &   &   & \cencol{$\downarrow$}{1} &        \\

${\scriptstyle x_1 \otimes x_5 = 1}$                         & 1 & 0 & 0 & 0 & 1 & 1  &               & 1 &     0  &     0  & 0 & 1 &     1  & 1 & 0 & 0 & 0 & 1 & 1  & $\rightarrow$ & \bf{0}       & 0 & 0 & 0 & \bf{0}                   & \bf{1} \\
${\scriptstyle x_2 \otimes x_3 \otimes x_4 \otimes x_5 = 0}$ & 0 & 1 & 1 & 1 & 1 & 0  & $\rightarrow$ & 0 & \bf{1} &     0  & 0 & 0 & \bf{0} & 0 & 0 & 0 & 1 & 1 & 1  &               & \bf{0}       & 0 & 0 & 1 & \bf{0}                   & \bf{0} \\
${\scriptstyle x_1 \otimes x_2 \otimes x_3 \otimes x_5 = 0}$ & 1 & 1 & 1 & 0 & 1 & 0  & $\rightarrow$ & 0 &     0  & \bf{1} & 0 & 0 & \bf{1} & 0 & 0 & 0 & 0 & 0 & 0  &               &     0        & 0 & 0 & 0 &     0                    &     0  \\
${\scriptstyle x_2 \otimes x_4 \otimes x_5 = 1}$             & 0 & 1 & 0 & 1 & 1 & 1  &               & 0 &     0  &     0  & 1 & 1 &     1  & 0 & 0 & 0 & 0 & 0 & 0  & 		      &     0        & 0 & 0 & 0 &     0                    &     0  \\

\\[-6.5pt]

\geCaption{1}{2.2cm}{\centering 1) Random XOR constraints} & 
	\geCaption{6}{1.5cm}{\centering 2) Initial constraint matrix} & 
	\geCaption{7}{2cm}{\centering 3) Echelonized matrix: assign- ments $x_2=0$ and $x_3=1$ are derived} &
	\geCaption{6}{1.7cm}{\centering 4) Updated matrix (rows 2 and 4 are swapped)} & 
	\geCaptionLast{7}{2.2cm}{\centering 5) If $x_1$ and $x_5$ are set to 1 (e.g., by search), the system is unsatisfiable} \\

\end{tabular}
 \caption{Propagating %
XOR constraints using
Gaussian elimination 
in $\mathbb{F}_2$.}
\label{fig:ge-example}
\end{figure*}

In order to turn \cpfim{}
into a suitable bounded oracle,
we need to extend it with
an efficient propagator
for XOR constraints. 
This propagator is based 
on the process of 
\emph{Gaussian elimination}
\cite{Gomes2007}, 
a classical algorithm 
for solving systems 
of linear equations.
Each XOR constraint can be
viewed as a linear equality 
over the field $\mathbb{F}_2$ of 
two elements, $0$ and $1$, 
and all coefficients 
form a binary matrix
(\autoref{fig:ge-example}.2).
At each step, the matrix is
updated with the latest
variable assignments and
transformed to
\emph{row echelon form}, where
all ones are on or above
the main diagonal and
all non-zero rows are 
above any rows of all zeroes
(\autoref{fig:ge-example}.3).
During echelonization, 
two situations enable propagation.
If a row becomes empty while
its right hand side 
is equal to $1$, 
the system is unsatisfiable and
the current search branch terminates
(\autoref{fig:ge-example}.5).
If a row contains only 
one free variable, 
it is assigned
the right hand side of the row
(\autoref{fig:ge-example}.3).

Gaussian elimination 
in $\mathbb{F}_2$ 
can be performed
very efficiently, because 
no division is necessary
(all coefficients are $1$), and 
subtraction and addition are 
equivalent operations.
For a system of
$k$ XOR constraints
over $n$ variables,
the total time complexity of
Gaussian elimination is
$\bigoh{k^2n}$.

\subsection{EFlexics: Efficient pattern sampler}

Generic constraint solvers
currently cannot compete with
the efficiency and scalability of
specialized mining algorithms.
In order to develop
a less flexible, yet more efficient
version of our sampler,
we extend the well-known 
\eclat{} algorithm
to handle XOR constraints.
Thus, \algNameSpec{} is tailored 
for frequent itemset sampling and
uses \eclat{}XOR 
(Algorithm \ref{alg:eclat-xor})
as an oracle.

Algorithm \ref{alg:eclat-xor}
shows the pseudocode 
of the extended \eclat{}.
The algorithm relies on
the \emph{vertical} data
representation, i.e.,
for each candidate item,
it stores a set of indices of
transactions (TIDs),
in which this item occurs
(Line~\ref{line:tids}).
\eclat{} starts with
determining frequent items and
ordering them,
by frequency ascending.
It explores the search space
in a depth-first manner,
where each branch
corresponds to (ordered)
itemsets that share a prefix.

\begin{algorithm}
\caption{\eclat{}
augmented with 
XOR constraint propagation
(Lines \ref{line:xor-start}-\ref{line:xor-end})}
\newcommand{\MarkXorPropagation}{%
	\hspace*{0.5cm}%
    \smash{\raisebox{-1.75cm}{$%
		\left\{
			\begin{array}{@{}c@{}}
				\\{}\\{}\\{}\\{}\\{}\\{}\\{}\\{}\\{}\\{}
			\end{array}
		\right.%
	$}}
}
\begin{algorithmic}[1]
\Require Dataset $\DB$ over items $\Items$, min.freq $\minsup$,
	XOR matrix $M$
\Ensure Item order $\succ_\Items$ by frequency ascending
\Function{EclatXOR}{$\DB$, $\minsup$, $M$}
	\Statex \Comment{\emph{Mine all frequent patterns that satisfy XOR constraints encoded by $M$}}
	\State Frequent items $FI = \emptyset$
			\label{line:singleton-start}
	\For{item $\Item \in \Items$}
		\State $TID_\Item = \set{\text{transaction index }%
				\Transaction \in \Transactions\mid%
				\DB_{\Transaction\Item}=1}$
			\label{line:tids}
		\If{$\abs{TID_\Item} \geq \minsup$} \Comment Item is frequent
			\State $FI \overset{Add}{\leftarrow} \tuple{\Item, TID_\Item}$
		\EndIf
	\EndFor
	\State \Call{Sort}{$FI$, $\succ_\Items$}
		\label{line:singleton-end}
	\For{$\Item \in FI$}
		\State Candidate suffixes $CS = \set{\Item^\prime \in FI \setminus \Item \mid \Item^\prime >_\Items \Item}$
			\label{line:suffixes-singleton}
		\State \Call{EqClass}{$\set{\Item}$, $CS$, $M$}
	\EndFor
\EndFunction
\Function{EqClass}{Prefix $P$, cand.suffixes $CS \neq \emptyset$, $M$}%
	\Statex \Comment{\emph{Mine all patterns that start with $P$}}
	\If{\Call{CheckConstraints}{$P$, $M$}}
		\State \Return $P$ \Comment Return prefix, if it satisfies XORs
	\EndIf
	\For{candidate suffix $s \in CS$}
		\State $P^{\prime} = P \cup s$; frequent suffixes $FS =$
		\Statex $\set{f \in CS \setminus s \mid%
					f >_\Items s \land \abs{f.TID \cap s.TID} \geq \minsup}$
			\label{line:suffixes-eq}
		\Statex \MarkXorPropagation{} \Comment{\emph{Propagate XOR constraints}}
		\State $U_1 = \set{s}$, $U_0 = CS \setminus FS$ \Comment{Variable updates}
			\label{line:xor-start}
		\State $M^{\prime} =$ \Call{UpdateAndEchelonize}{$M$, $U_1$, $U_0$}
		\State $\tuple{A_1, A_0} = $ \Call{Propagate}{$M^\prime$}
			\label{line:xor-propagate}
			\Comment{Item variables}
		\Statex \Comment{that were assigned value $1$ or $0$ by propagation}
		\State $FS^\prime = FS \setminus \left(A_1 \cup A_0\right)$
			\label{line:xor-update}
		\If{$A_1 \neq \emptyset$} \Comment If prefix was extended,
		\Statex \Comment{update TIDs and check support}
		\State $P^{\prime} \leftarrow P^{\prime} \cup A_1$,
					$\Delta_{TID} = \bigcap\limits_{f \in A_1}{f.TID}$
			\label{line:xor-sup-check}
		\State $FS^{\prime} \leftarrow \set{f^\prime \in FS^\prime :%
					\abs{f^{\prime}.TID \cap \Delta_{TID}} \geq \minsup}$
			\label{line:xor-end}
		\EndIf
		\Statex %
		\If{$\abs{P^\prime.TID} \geq \minsup \land FS^{\prime} \neq \emptyset$}
			\State \Call{EqClass}{$P^{\prime}$, $FS^{\prime\prime}$, $M^{\prime}$}
		\EndIf
	\EndFor
\EndFunction
\end{algorithmic} \label{alg:eclat-xor}
\end{algorithm}

The core operation is
referred to as \emph{processing
an equivalence class} 
of itemsets (\algname{EqClass}).
For each prefix,
\eclat{} maintains a set of
candidate suffixes, i.e.,
items that follow
the last item of the prefix
in the item order and
are frequent.
The frequency of
a candidate suffix,
given the prefix,
is computed by
intersecting its TID with
the TID of the prefix
(Lines~\ref{line:suffixes-singleton},
\ref{line:suffixes-eq}, and
\ref{line:xor-end}).

We extend \eclat{} 
with XOR constraint handling
(Lines~\ref{line:xor-start}-\ref{line:xor-end}).
Variable updates stem from
\eclat{} extending the prefix
and removing infrequent suffixes
(Line~\ref{line:xor-start}).
XOR propagation can result in 
extending the prefix or
removing candidate suffixes
as well (Line~\ref{line:xor-update}).
Furthermore, if the prefix 
has been extended,
TIDs of candidate suffixes
need to be updated,
with some of them possibly
becoming infrequent,
leading to further propagation
(Lines~\ref{line:xor-update}-\ref{line:xor-end}).
If the prefix becomes infrequent,
the search branch terminates.

Fixed variable-order search,
like \eclat{}, is
an advantageous case for
Gaussian elimination
\cite{Soos2010}:
non-zero elements
are restricted to
the right region of
the matrix, hence
Gaussian elimination only
needs to consider 
a contiguous, progressively shrinking
subset of columns.
Total memory overhead of \eclat{}XOR
compared to plain \eclat{} is
$\bigoh{d \times \abs{\mathcal{F}} \times N_{XOR} +%
pivot \times \tilt}$,
where $d$ denotes
maximal search depth,
$\abs{\mathcal{F}}$
the number of frequent singletons
(columns of a matrix), and 
$N_{XOR}$ the number of
XOR constraints
(rows of a matrix).
The first term refers to 
a set of XOR matrices in
unexplored search branches,
whereas the second term 
refers to storing 
itemsets in a cell
(Line \ref{line:gen-cell} in
Algorithm \ref{alg:wg}
in Appendix \ref{sec:appendix:wg}).
\section{Pattern set sampling} \label{sec:set-sampling}

We highlight the flexibility
of \algName{} by introducing and
tackling the novel task of
\emph{sampling sets of patterns}.
For the purposes of sampling,
a set of patterns is essentially
treated as a composite pattern.
Typically, constituent patterns
are required to be
different from each other.
The quality (and hence,
the sampling probability)
of a pattern set depends on 
collective properties of
constituent patterns.
These characteristics,
coupled with the immense 
size of the pattern set 
search space,
make sampling even 
more challenging.

To develop a sampler, 
we extend \algNameGen{} with
the CSP-formulation of
the $k$-pattern set mining task
\cite{Guns2013},
which in turn builds upon
the formulation of
the itemset mining task
described in Section \ref{sec:prelim}.
Recall that a CSP is defined by
a set of variables and
constraints over these variables.
Each constituent pattern is
modeled with distinct
item and transaction variables,
i.e., $\ItemVar_{\Item{}k}$ and
$\TransVar_{\Transaction{}k}$ 
for the $k$th pattern $p_k$.
Note that this increases
the length of XOR constraints,
which poses an additional challenge
from the sampling perspective.

Any single-pattern constraint
can be enforced for
a constituent pattern, e.g.,
$\fnc{minfreq}{\minsup}$,
$closed$, or $\fnc{minlen}{\minlen}$.
A common pattern set-specific 
constraint is $no\ overlap$,
which enforces that 
neither the itemsets (1),
nor the sets of transactions 
that they cover (2) overlap:

\smallskip

{\centering \begin{tabular}{c@{\hskip 20pt}c}
	(1)\ $\forall \Item \in \Items               \ \sum{\ItemVar_{\Item{}k}} \leq 1$ &%
	(2)\ $\forall \Transaction \in \Transactions \ \sum{\TransVar_{\Transaction{}k}} \leq 1$ 
\end{tabular}\par}

\smallskip

\noindent Furthermore,
there is typically
a symmetry-breaking
constraint that requires that
the set of transaction indices of
$p_i$ lexicographically precedes
those of $\set{p_j\ \mid\ \ j > i}$.
This approach allows modeling
a wide range of
pattern set sampling tasks,
e.g., sampling $k$-term DNFs,
conceptual clusterings,
redescriptions, and others.
In this paper, we use 
the problem of
\emph{tiling datasets} 
\cite{Geerts2004}
as an example.

The main aim of tiling is
to cover a large number of 
$1$s in a binary $0/1$ dataset
with a given number of patterns.
Thus, \emph{a tiling} is 
essentially a set of itemsets
that together describe
as many item occurrences
as possible.
Without loss of generality, 
we describe the task of sampling
non-overlapping $2$-tilings ($k=2$).
Let $p_1$ and $p_2$ denote
the constituent patterns of
a $2$-tiling.
The quality of a tiling is 
equal to its \emph{area}, 
i.e., the number of $1$s
that it covers:
\begin{equation*}
\fnc{area}{\set{p_1, p_2}} = \left(%
	\freq{p_1} \times \abs{p_1} +%
	\freq{p_2} \times \abs{p_2}%
\right)
\end{equation*}

The scaling constant for 
$area$ is $C=\sum{\DB_{ti}}$, i.e.,
the total number of $1$s 
in the dataset.
The tilt bound is %
$\bound{\tilt} = \left. \sum{\DB_{ti}} \middle/%
	\left(2 \times \left(\abs{\DB} \times \minsup\right) \times \minlen\right)%
\right.$, where
the denominator is
the smallest possible area
of a $2$-tiling 
given the constraints.
\section{Experiments}%
\label{sec:experiments}

The experimental evaluation
focuses on
accuracy, scalability, 
and flexibility of 
the proposed sampler.
The research questions 
are as follows:
\begin{description}

\item[Q1] \emph{How close is
	the empirical sampling distribution
	to the target distribution?}
\item[Q2] \emph{How does \algName{}
	compare to the specialized alternatives?}
\item[Q3] \emph{Does \algName{}
	scale to large datasets?}
\item[Q4] \emph{How flexible is \algName{}, i.e.,
	can it be used for new pattern sampling tasks?}

\end{description}

The implementations of 
\algNameGen{} and \algNameSpec{}%
\footnote{%
	Available at
	\url{https://bitbucket.org/wxd/flexics}.
}
are based on \algname{cp4im}%
\footnote{\url{https://dtai.cs.kuleuven.be/CP4IM}}
and a custom implementation of \eclat{}
respectively.
Both are augmented with 
a propagator for
a system of XOR constraints
based on the implementation of
Gaussian elimination in the
\algname{m4ri} library%
\footnote{\url{https://bitbucket.org/malb/m4ri/}}
\cite{M4RI}.
All experiments 
were run on a Linux machine
with an Intel Xeon CPU@3.2GHz
and 32Gb of RAM.

\paragraph{Q1: Sampling accuracy}
We study the sampling accuracy
of \algNameGen{} in settings with
tight constraints,
which yield a relatively
low number of solutions.
This allows us to compute
the exact statistical distance 
between the empirical sampling distribution 
and  the target distribution.
We investigate settings with
various quality measures and
constraint sets as well as
the effect of 
the tolerance parameter $\kappa$.

We select several datasets from 
the CP4IM repository\footnote{%
	Source: \url{https://dtai.cs.kuleuven.be/CP4IM/datasets/}}
in the following way.
For each dataset, we construct
two constraint sets (see
\autoref{table:settings}).
We choose a value of $\minsup$
such that there are approximately
$60\,000$ frequent patterns.
Given $\minsup$, we choose
a value of $\minlen \geq 2$
such that there are at least
$15\,000$ closed patterns that 
satisfy the $minlen$ constraint.
In order to obtain sufficiently
challenging sampling tasks,
we omit the datasets where
the latter condition does not hold
(i.e., there are too few
closed ``long'' patterns).
Combining two constraint sets 
with three quality measures
yields six experimental settings 
per dataset.
\autoref{table:accuracy-multiple}
shows dataset statistics
and parameter values.
For each $\kappa \in \set{0.1, 0.5, 0.9}$,
we request $900\,000$ samples.

\begin{table}
\caption{Combinations of
two constraint sets and
three quality measures yield 
six experimental settings per 
dataset for sampling accuracy experiments;
see Section~\ref{sec:prelim}
for definitions.}

\smallskip

\begin{minipage}[t]{0.6\textwidth}
\centering%
\begin{tabular}{lcc}\toprule
    & \multirow{2}{*}{Constraints $\Constraints$} & Itemsets    \\
    &                                             & per dataset \\
\midrule
F   & $\fnc{min\mathbf{F}req}{\minsup}$ & $\sim60\,000$ \\[0.1cm]
\multirow{2}{*}{FCL} %
    & $\fnc{min\mathbf{F}req}{\minsup} \land$                & \multirow{2}{*}{$\geq15\,000$}\\
	& $\mathbf{C}losed \land \fnc{min\mathbf{L}en}{\minlen}$ & \\
\bottomrule
\end{tabular}

\hfill

\end{minipage}
\begin{minipage}[t]{0.4\textwidth}
\centering%
\begin{tabular}{lc}\toprule
Quality         & Tilt                  \\ 
measure $\Qual$ & bound $\bound{\tilt}$ \\
\midrule
$uniform$ ($\Qual \equiv 1$)%
          & $1$ \\[0.05cm]
$purity$  & $2$ \\[0.05cm]
$freq$    & $\minsup^{-1}$ \\
\bottomrule
\end{tabular}
\end{minipage}
\label{table:settings}
\end{table}

Let $T$ denote the set of
all itemsets that satisfy
the constraints,
$E$ denote the 
multiset of all samples,
and $\multiplicity{S}$
its multiplicity function.
For a given quality measure $\Qual$,
target and empirical probabilities of
sampling an itemset $\Pattern$ are
respectively defined as
$\Prob[T]{\Pattern} = \fnc{\Qual}{\Pattern} /%
	\sum\limits_{\Pattern^\prime \in T}{\fnc{\Qual}{\Pattern^\prime}}$
and
$\Prob[E]{\Pattern} = \fnc{\multiplicity{E}}{\Pattern} / \abs{E}$. 
We use \emph{Jensen-Shannon (JS) divergence}
to quantify the statistical distance
between $\ProbSymbol_T$ and $\ProbSymbol_E$.
Let $\fnc{D_{KL}}{\ProbSymbol_1 \| \ProbSymbol_2}$
denote the well-known
\emph{Kullback-Leibler divergence}
between distributions
$\ProbSymbol_1$ and $\ProbSymbol_2$.
JS-divergence $D_{JS}$ is defined as follows:
\begin{align*}
\fnc{D_{JS}}{\ProbSymbol_T \| \ProbSymbol_E} = & \ 0.5 \times \left(%
	\fnc{D_{KL}}{\ProbSymbol_T \| \ProbSymbol_M} +
	\fnc{D_{KL}}{\ProbSymbol_E \| \ProbSymbol_M} 
\right) \\
\text{where }\ProbSymbol_M = &\ 0.5 \times \left(\ProbSymbol_T + \ProbSymbol_E\right)
\end{align*}
\noindent JS-divergence ranges from $0$ to $1$ and,
unlike KL-divergence,
does not require that
$\Prob[T]{\Pattern} > 0 \Rightarrow \Prob[E]{\Pattern} > 0$,
i.e., that each solution is
sampled at least once, which
does not always hold in
sampling experiments.
We compare $D_{JS}$ 
attained with our sampler 
with that of the ideal sampler, 
which materializes all itemsets
satisfying the constraints,
computes their qualities, and
uses these to sample directly
from the target distribution.

\parhead{A characteristic
experiment in detail}
Our experiments show that 
results are consistent
across various datasets.
Therefore, we first study 
the results on the \texttt{vote} 
dataset in detail.
\autoref{table:accuracy-vote}
shows that the theoretical
error tolerance parameter 
$\kappa$ has no
considerable effect on
practical performance of
the algorithm, except for
runtime, which we evaluate
in subsequent experiments.
One possible explanation is
the high quality of the output 
of the estimation phase, 
which thus alleviates
theoretical risks that
have to be accounted for
in the general case
(see below for
a numerical characterization).
Hence, in the following experiments
we use $\kappa=0.9$
unless noted otherwise.

JS-divergences for 
different quality measures
and constraint sets are
impressively low,
equivalent to the highest
possible sampling accuracy
attainable with the ideal sampler.
\autoref{fig:vote-ppplot}
illustrates this for 
$\fnc{minfreq}{0.09} \land closed \land \fnc{minlen}{7}$,
$\Qual = freq$,
and $\kappa=0.9$
($D_{JS}=0.004$):
the sampling frequency of
an average itemset
is close to the target probability.
For at least $90\%$ of patterns,
the sampling error does not
exceed a factor of $2$.

\begin{figure}[t]
\centering \texttt{vote}, 
           $\fnc{minfreq}{0.09} \land closed \land \fnc{minlen}{7}$, 
           $\Qual = freq$ \\
           $\kappa = 0.9$/$\fnc{\varepsilon}{\kappa} = 100.38$;
           $D_{JS}=0.004$
\begingroup
  \makeatletter
  \providecommand\color[2][]{%
    \GenericError{(gnuplot) \space\space\space\@spaces}{%
      Package color not loaded in conjunction with
      terminal option `colourtext'%
    }{See the gnuplot documentation for explanation.%
    }{Either use 'blacktext' in gnuplot or load the package
      color.sty in LaTeX.}%
    \renewcommand\color[2][]{}%
  }%
  \providecommand\includegraphics[2][]{%
    \GenericError{(gnuplot) \space\space\space\@spaces}{%
      Package graphicx or graphics not loaded%
    }{See the gnuplot documentation for explanation.%
    }{The gnuplot epslatex terminal needs graphicx.sty or graphics.sty.}%
    \renewcommand\includegraphics[2][]{}%
  }%
  \providecommand\rotatebox[2]{#2}%
  \@ifundefined{ifGPcolor}{%
    \newif\ifGPcolor
    \GPcolortrue
  }{}%
  \@ifundefined{ifGPblacktext}{%
    \newif\ifGPblacktext
    \GPblacktextfalse
  }{}%
  \let\gplgaddtomacro\g@addto@macro
  \gdef\gplbacktext{}%
  \gdef\gplfronttext{}%
  \makeatother
  \ifGPblacktext
    \def\colorrgb#1{}%
    \def\colorgray#1{}%
  \else
    \ifGPcolor
      \def\colorrgb#1{\color[rgb]{#1}}%
      \def\colorgray#1{\color[gray]{#1}}%
      \expandafter\def\csname LTw\endcsname{\color{white}}%
      \expandafter\def\csname LTb\endcsname{\color{black}}%
      \expandafter\def\csname LTa\endcsname{\color{black}}%
      \expandafter\def\csname LT0\endcsname{\color[rgb]{1,0,0}}%
      \expandafter\def\csname LT1\endcsname{\color[rgb]{0,1,0}}%
      \expandafter\def\csname LT2\endcsname{\color[rgb]{0,0,1}}%
      \expandafter\def\csname LT3\endcsname{\color[rgb]{1,0,1}}%
      \expandafter\def\csname LT4\endcsname{\color[rgb]{0,1,1}}%
      \expandafter\def\csname LT5\endcsname{\color[rgb]{1,1,0}}%
      \expandafter\def\csname LT6\endcsname{\color[rgb]{0,0,0}}%
      \expandafter\def\csname LT7\endcsname{\color[rgb]{1,0.3,0}}%
      \expandafter\def\csname LT8\endcsname{\color[rgb]{0.5,0.5,0.5}}%
    \else
      \def\colorrgb#1{\color{black}}%
      \def\colorgray#1{\color[gray]{#1}}%
      \expandafter\def\csname LTw\endcsname{\color{white}}%
      \expandafter\def\csname LTb\endcsname{\color{black}}%
      \expandafter\def\csname LTa\endcsname{\color{black}}%
      \expandafter\def\csname LT0\endcsname{\color{black}}%
      \expandafter\def\csname LT1\endcsname{\color{black}}%
      \expandafter\def\csname LT2\endcsname{\color{black}}%
      \expandafter\def\csname LT3\endcsname{\color{black}}%
      \expandafter\def\csname LT4\endcsname{\color{black}}%
      \expandafter\def\csname LT5\endcsname{\color{black}}%
      \expandafter\def\csname LT6\endcsname{\color{black}}%
      \expandafter\def\csname LT7\endcsname{\color{black}}%
      \expandafter\def\csname LT8\endcsname{\color{black}}%
    \fi
  \fi
    \setlength{\unitlength}{0.0500bp}%
    \ifx\gptboxheight\undefined%
      \newlength{\gptboxheight}%
      \newlength{\gptboxwidth}%
      \newsavebox{\gptboxtext}%
    \fi%
    \setlength{\fboxrule}{0.5pt}%
    \setlength{\fboxsep}{1pt}%
\begin{picture}(4680.00,3276.00)%
    \gplgaddtomacro\gplbacktext{%
      \csname LTb\endcsname%
      \put(924,686){\makebox(0,0)[r]{\strut{}$8.00 \cdot 10^{-6}$}}%
      \put(924,2809){\makebox(0,0)[r]{\strut{}$9.20 \cdot 10^{-5}$}}%
      \put(1806,264){\makebox(0,0){\strut{}$2.32 \cdot 10^{-5}$}}%
      \put(3851,264){\makebox(0,0){\strut{}$8.66 \cdot 10^{-5}$}}%
      \colorrgb{0.00,0.00,1.00}%
      \put(1605,888){\makebox(0,0)[l]{\strut{}\tiny 5\%}}%
      \put(1443,1071){\makebox(0,0)[l]{\strut{}Avg}}%
      \put(1540,1293){\makebox(0,0)[l]{\strut{}\tiny  95\%}}%
      \colorrgb{1.00,0.00,0.00}%
      \put(3638,3011){\makebox(0,0)[l]{\strut{}Target}}%
      \put(1798,2960){\makebox(0,0)[l]{\strut{}Target$\times 2$}}%
      \put(3638,1874){\makebox(0,0)[l]{\strut{}Target$\times 0.5$}}%
    }%
    \gplgaddtomacro\gplfronttext{%
      \csname LTb\endcsname%
      \put(418,1747){\makebox(0,0){\strut{}\shortstack{Empirical\\probability}}}%
      \put(2801,154){\makebox(0,0){\strut{}\shortstack{Target\\probability}}}%
    }%
    \gplgaddtomacro\gplbacktext{%
    }%
    \gplgaddtomacro\gplfronttext{%
      \csname LTb\endcsname%
      \put(3110,827){\makebox(0,0){\strut{}\shortstack{\tiny Bounds\\\tiny (log)}}}%
    }%
    \gplbacktext
    \put(0,0){\includegraphics{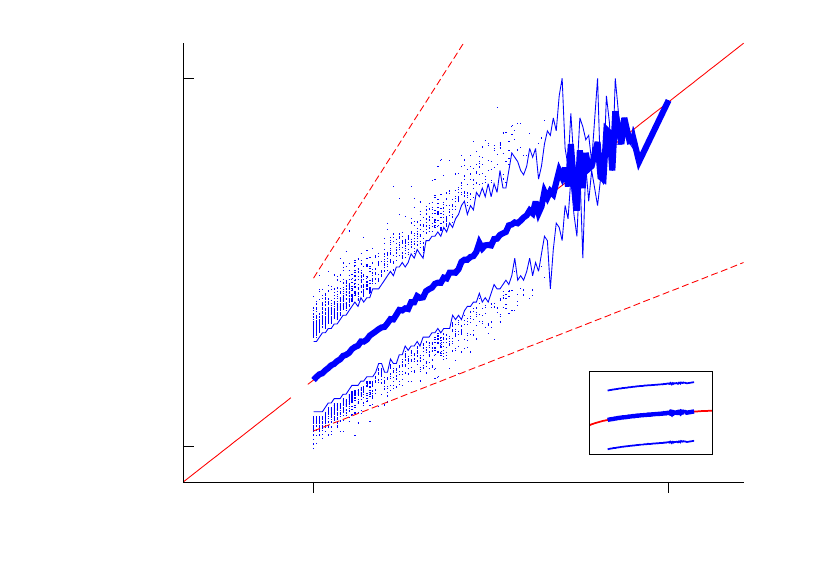}}%
    \gplfronttext
  \end{picture}%
\endgroup
 \caption{Empirical 
sampling frequencies
of itemsets that share
the same target probability,
i.e., have the same quality.
On average, frequencies
are close to the
target probabilities. 
90\% of frequencies are 
well within a factor 2
from the target, which is 
considerably lower than
the theoretical factor 
of 100.38.
(The dots show the tails of 
the empirical probability distribution
for a given target probability.
The lower right box shows
theoretical bounds and 
empirical frequencies
on the logscale).}
\label{fig:vote-ppplot}
\end{figure}

\autoref{table:accuracy-multiple}
shows that similar conclusions
hold for several other datasets.
Over all experimental settings,
the error of the estimation of
the total weight of all solutions,
which is used to derive
the number of XOR constraints
for the sampling phase,
never exceeds $10\%$, whereas
the bounds assume the error
of $45$ to $80\%$.
This helps explain why
practical errors are
considerably lower than
theoretical bounds.

In line with theoretical expectations
(see Section~\ref{sec:algorithm}),
the \texttt{splice} dataset
proves the most challenging
due to the large number of items
(variables in XOR constraints).
As a result, \algNameGen{} does not
generate the requested number of samples
within the 24-hour timeout.
We study the runtime in 
the following experiment.

\begin{table}[t]
\caption{%
Sampling accuracy of
\algName{}
(here \algNameGen{})
is consistently high across 
quality measures,
constraint sets
($\fnc{min\mathbf{F}req}{0.09}$
vs. $\fnc{min\mathbf{F}req}{0.09} \land
	\mathbf{C}losed \land 
	\fnc{min\mathbf{L}en}{7}$), and
error tolerance $\kappa$.
JS-divergence
is impressively low,
equivalent to that of 
the ideal sampler.
}
\centering%
\begin{tabular}{ccccccc}\toprule
 & \multicolumn{6}{c}{\texttt{vote} dataset, JS-divergence from target}  \\
\cmidrule{2-7}
 & \multicolumn{2}{c}{Uniform ($\bound{\tilt}=1$)} & \multicolumn{2}{c}{Purity ($\bound{\tilt}=2$)} & \multicolumn{2}{c}{Frequency ($\bound{\tilt}=11$)} \\
$\kappa$        & F       & FCL     & F       & FCL     & F       & FCL \\
\midrule
$0.9$           & $0.013$ & $0.004$ & $0.013$ & $0.004$ & $0.013$ & $0.004$ \\
$0.5$           & $0.013$ & $0.004$ & $0.013$ & $0.004$ & $0.013$ & $0.004$ \\
$0.1$           & $0.013$ & $0.004$ & $0.013$ & $0.004$ & $0.013$ & $0.004$ \\
\midrule
Ideal sampler   & $0.013$ & $0.004$ & $0.013$ & $0.004$ & $0.013$ & $0.004$ \\
\bottomrule
\end{tabular}
\label{table:accuracy-vote}
\end{table}

\begin{table}
\caption{%
Dataset statistics and
parameter values and
results of sampling
accuracy experiments.
Even with high
error tolerance 
$\kappa=0.9$,
JS-divergence 
of \algName{} 
(here \algNameGen{}) 
is consistently low
across datasets,
quality measures, 
and constraint sets.
(On the \texttt{splice} dataset,
\algNameGen{} generates 
less than $900\,000$ samples
before the timeout;
see also Table~\ref{table:runtime-comparison}.)}
\centering%
\setlength{\tabcolsep}{3.5pt}
\begin{tabular}{lcccc@{\hskip 2pt}cc@{\hskip 15pt}c@{\hskip 4pt}cc@{\hskip 4pt}cc@{\hskip 4pt}c}\toprule
\multicolumn{7}{c}{} & \multicolumn{6}{c}{JS-divergence, $\kappa=0.9$} \\
\cmidrule{8-13}
                     &             &                &         & \cencol{}{2}          &           & \multicolumn{2}{c}{Uniform} & \multicolumn{2}{c}{Purity} & \multicolumn{2}{c}{Frequency} \\
                     & $\abs{\DB}$ & $\abs{\Items}$ & Density & \cencol{$\minsup$}{2} & $\minlen$ & F & FCL & F & FCL & F & FCL \\
\midrule
\texttt{german}      & $1000$ & $112$ & $34\%$ & $0.35$ & ($349$)  & $2$ & $0.012$ & $0.003$ & $0.013$ & $0.003$ & $0.013$ & $0.003$ \\
\texttt{heart}       & $296$  & $95$  & $47\%$ & $0.43$ & ($127$)  & $2$ & $0.012$ & $0.003$ & $0.012$ & $0.003$ & $0.012$ & $0.003$ \\
\texttt{hepatitis}   & $137$  & $68$  & $50\%$ & $0.39$ & ($53$)   & $5$ & $0.013$ & $0.004$ & $0.014$ & $0.004$ & $0.013$ & $0.004$ \\
\texttt{kr-vs-kp}    & $3196$ & $74$  & $49\%$ & $0.69$ & ($2190$) & $6$ & $0.013$ & $0.005$ & $0.013$ & $0.005$ & $0.013$ & $0.005$ \\
\texttt{primary}     & $336$  & $31$  & $48\%$ & $0.09$ & ($30$)   & $7$ & $0.013$ & $0.004$ & $0.013$ & $0.004$ & $0.013$ & $0.004$ \\
\texttt{splice}      & $3190$ & $287$ & $21\%$ & $0.04$ & ($122$)  & $3$ & $-$     & $-$     & $-$     & $-$     & $-$     & $-$     \\
\texttt{vote}        & $435$  & $48$  & $33\%$ & $0.09$ & ($40$)   & $7$ & $0.013$ & $0.004$ & $0.013$ & $0.004$ & $0.013$ & $0.004$ \\

\bottomrule
\end{tabular}

 \label{table:accuracy-multiple}
\end{table}

\begin{table}
\caption{%
The accuracy of 
\algName{} 
(here \algNameGen{})
is consistent 
across settings.
In uniform frequent 
itemset sampling,
performance of
\algName{} as well 
as of ACFI is equivalent to 
that of the ideal sampler
(not shown).
In frequency-weighted sampling,
it is comparable to
the exact two-step sampler
(TS $\sim freq$)
with rejection.
However, the latter
suffers from low
acceptance rates, which,
for settings marked with `$-$',
is not improved by
increasing bias
(TS $\sim freq^4$).
On \texttt{splice}, 
neither TS nor \algName{}
generate $900\,000$ samples
before the timeout;
see also Table~\ref{table:runtime-comparison}.}
\newcommand{\Sci}[2]{${\scriptstyle #1 \cdot 10}^{#2}$}

\centering%
\setlength{\tabcolsep}{2.75pt}
\begin{tabular}{lccccc@{\hskip 0.5pt}rc@{\hskip 0.5pt}rccc@{\hskip 0.5pt}rc@{\hskip 0.5pt}r}\toprule
 & \multicolumn{14}{c}{JS-divergence (for TS, acceptance rate)} \\
\midrule
 & \multicolumn{2}{c}{Uniform} & & \multicolumn{11}{c}{Frequency} \\
 & \multicolumn{2}{c}{F}       & & \multicolumn{5}{c}{F} & \hspace{0.5pt} & \multicolumn{5}{c}{FCL} \\
\cmidrule{2-3} \cmidrule{5-9} \cmidrule{11-15}
 & \cencol{GF}{1} & \cencol{ACFI}{1}    & \hspace{0.5pt} & %
   \cencol{GF}{1} & \cencol{TS$\sim$$freq$}{2} & \cencol{TS$\sim$$freq^4$}{2} & & %
   \cencol{GF}{1} & \cencol{TS$\sim$$freq$}{2} & \cencol{TS$\sim$$freq^4$}{2} \\
\midrule
 \texttt{german}      & $0.01$  & $0.01$  & & $0.01$  & $-$     & (\Sci{9}{-8})  & $-$     & ($0.02$) & & $0.00$  & $-$    & (\Sci{5}{-8}) & $-$     & ($0.06$)      \\
 \texttt{heart}       & $0.01$  & $0.01$  & & $0.01$  & $-$     & (\Sci{4}{-10}) & $-$     & ($0$)    & & $0.00$  & $-$    & ($0$)         & $-$     & (\Sci{3}{-3}) \\
 \texttt{hepatitis}   & $0.01$  & $0.01$  & & $0.01$  & $-$     & (\Sci{2}{-6})  & $-$     & ($0.01$) & & $0.00$  & $-$    & (\Sci{1}{-6}) & $-$     & ($0.01$)      \\
 \texttt{kr-vs-kp}    & $0.01$  & $0.01$  & & $0.01$  & $-$     & (\Sci{7}{-7})  & $-$     & ($0.01$) & & $0.01$  & $-$    & (\Sci{4}{-7}) & $-$     & (\Sci{4}{-3}) \\
 \texttt{primary}     & $0.01$  & $0.01$  & & $0.01$  & $0.01$  & ($0.30$)       & $0.40$  & ($0.99$) & & $0.01$  & $0.01$ & ($0.13$)      & $0.27$  & ($0.10$)      \\
 \texttt{splice}      & $0.01$  & $-$     & & $-$     & $-$     & ($0$)          & $-$     & ($0$)    & & $-$     & $-$    & ($0$)         & $-$     & ($0$)         \\
 \texttt{vote}        & $0.01$  & $0.01$  & & $0.01$  & $0.01$  & ($0.13$)       & $0.23$  & ($0.94$) & & $0.00$  & $0.01$ & ($0.05$)      & $0.14$  & ($0.22$)      \\

\bottomrule
\end{tabular} \label{table:accuracy-comparison}
\end{table}

\begin{table}
\caption{Runtime
in milliseconds
required to sample
a frequent itemset,
including pre-processing, i.e.,
estimation or burn-in,
amortized over $1000$ samples.
Both variants of \algName{}
are suitable for 
anytime exploration, 
although slower than 
the specialized samplers.
The two-step sampler is
the fastest in the task
it is tailored for, but
fails in the settings 
with tighter constraints.
\algNameSpec{} provides
runtime benefits
compared to \algNameGen{}.}

\newcommand{\rtrow}[9]{%
	\texttt{#1} & $#3$ & $#4$ & $#2$ & & $#5$ & $#6$ & $#7$
}

\setlength{\tabcolsep}{3pt}
\centering%
\begin{tabular}{l@{\hskip 10pt}c@{\hskip 10pt}cccccc}\toprule
 & \multicolumn{3}{c}{$\Qual=uniform$, $\Constraints =$ F} & \hspace{10pt} & \multicolumn{3}{c}{$\Qual=freq$, $\Constraints =$ F} \\
\cmidrule{2-4} \cmidrule{6-8}
 & \algNameGen{} & \algNameSpec{} & ACFI & & \algNameGen{} & \algNameSpec{} & TS$\sim$$freq$ \\
\midrule

 \rtrow{german}     {39} {110}   {25}  {133} {34}  {58540}{112}{2.9} \\
 \rtrow{heart}      {24} {60}    {45}  {73}  {44}  {-}    {95} {2.3} \\
 \rtrow{hepatitis}  {11} {23}    {33}  {30}  {45}  {2632} {68} {2.6} \\
 \rtrow{kr-vs-kp}   {6}  {59}    {9}   {59}  {10}  {8731} {74} {1.5} \\
 \rtrow{primary}    {4}  {10}    {10}  {27}  {25}  {0.10} {31} {11.2} \\
 \rtrow{splice}     {580}{170360}{1376}{-}   {1095}{-}    {287}{26.1} \\
 \rtrow{vote}       {8}  {25}    {19}  {46}  {28}  {0.03} {48} {10.9} \\
\bottomrule
\end{tabular} \label{table:runtime-comparison}
\end{table}

\parhead{Q2: Comparison with alternative pattern samplers}
We compare \algName{} to 
ACFI \cite{Boley2009} and 
TS \cite{Boley2012}, 
alternative samplers%
\footnote{%
	The code was provided by
	their respective authors.
	We also obtained 
	the ``unmaintained'' code 
	for the \emph{uniform} LRW sampler
	(personal communication), 
	but were unable to make it run 
	on our machines.
	The code for the FCA sampler
	was not available
	(personal communication).%
}
described in Section~\ref{sec:relwork},
in the settings that
they are tailored for.
ACFI only supports
the setting with a single 
$\fnc{minfreq}{\theta}$ 
constraint and
$\Qual = uniform$.
It is run with
a burn-in of $100\,000$ steps and 
uses a built-in heuristic
to determine the number of steps
between consecutive samples.
TS is evaluated in
the setting with $\Qual = freq$ and
both constraint sets from
the previous experiments.
It samples from two of the distributions
it supports, $freq$ and $freq^4$;
samples that do not
satisfy the constraints
are rejected.
Both samplers are requested
to generate $900\,000$ samples and
are allowed to run up to $24$ hours.
Datasets and parameters
are identical to
the previous experiments.

\autoref{table:accuracy-comparison}
shows the accuracy of the samplers.
The performance of \algName{} is
on par with specialized samplers.
That is, in uniform 
frequent itemset sampling,
the accuracy of both 
\algName{} and ACFI
is equivalent to that of 
the ideal sampler and
can therefore not be improved.
When sampling proportional
to frequency,
it is equivalent
to the accuracy of
the exact two-step sampler
TS $\sim freq$.
However, the latter
does not directly take
constraints into account,
which poses 
considerable problems
on most datasets.
For example, for the
\texttt{heart} dataset,
TS fails to generate
a single accepted sample,
despite generating 2 billion
unconstrained candidates.
This issue is not
solved by increasing
the bias towards
more frequent itemsets
by sampling 
proportional to $freq^4$.
Furthermore, this would
substantially decrease
accuracy, as seen in
\texttt{primary} and
\texttt{vote}.

\autoref{table:runtime-comparison}
shows the runtimes for
frequent itemset sampling
(i.e., only the $minfreq$ constraint).
In most settings,
\algNameSpec{} provides
runtime benefits
over \algNameGen{}.
The \texttt{splice} dataset 
is the most challenging due to 
the large number of items;
it highlights the importance of
an efficient constraint oracle.
Accordingly, the specialized sampler
ACFI is from 6 to 22 milliseconds
faster than a faster variant 
of \algName{} in uniform sampling
(excluding \texttt{splice}).
In frequency-weighted sampling,
\algName{} is considerably faster
in the settings with
tighter constraints,
where the two-step sampler
is slow to generate
accepted samples.
This illustrates 
the overhead as well as
the benefits of
the flexibility of
the proposed approach.
Furthermore, in these settings,
there are at most 
$66\,000$ patterns, which is
too low to suggest
the need for pattern sampling
(recall that the primary goal of
these experiments was to 
evaluate and compare
sampling accuracy)
and does not allow for
the overhead amortization.
We therefore tackle settings 
with a much larger
number of patterns
in the following experiments.

\smallskip

\parhead{Q3: Scalability}
To study scalability of
the proposed sampler,
we compare its runtime costs
with those required
to construct an ideal sampler
with \lcm{}%
\footnote{\url{http://research.nii.ac.jp/~uno/codes.htm}, ver. 3}, 
an efficient 
frequent itemset miner
\cite{Uno2005}.
To this end, we estimate
the costs of completing 
the following scenario:
pre-processing 
(estimation or counting),
followed by sampling
$100$ itemsets in 
two batches of $50$.
We use non-synthetic datasets 
from the FIMI repository\footnote{%
	\url{http://fimi.ua.ac.be/data/}%
}, which have fewer than 
one billion transactions and
select $\minsup$ such that
there are more than one billion
frequent itemsets (see 
Table~\ref{table:runtime-fimi}).

\parhead{A characteristic
experiment in detail}
We use the \texttt{accidents} dataset
($469$ items, $340\,183$ transactions)
and $\minsup=0.009$
(3000 transactions),
which results in
a staggering number of
$5.37$ billion frequent itemsets.
We run \wg{} with values of 
$\kappa \in \set{0.1, 0.5, 0.9}$.
(Note that the estimation phase
is identical for all three cases.)
The baseline sampler is
constructed as follows.
\lcm{} is first run in
counting mode, which only 
returns the total number of itemsets. 
Then, for each batch, $50$ random 
line numbers are drawn,
and the corresponding itemsets 
are printed while \lcm{} is
enumerating the solutions\footnote{%
Storing all itemsets on disk
provides no benefits:
it increases the mining runtime 
to 23 minutes and results in
a file of $215$Gb;
simply counting its lines
with `\texttt{wc -l}'
takes 25 minutes.}.
The latter phase is 
implemented with the standard 
Unix utility `\texttt{awk}`.

\begin{figure}[t]
{\centering \texttt{accidents}, $\fnc{minfreq}{0.009}$, $uniform$\par}

\smallskip

\begin{minipage}[b]{0.5964\textwidth}
\centering %
\begingroup
  \makeatletter
  \providecommand\color[2][]{%
    \GenericError{(gnuplot) \space\space\space\@spaces}{%
      Package color not loaded in conjunction with
      terminal option `colourtext'%
    }{See the gnuplot documentation for explanation.%
    }{Either use 'blacktext' in gnuplot or load the package
      color.sty in LaTeX.}%
    \renewcommand\color[2][]{}%
  }%
  \providecommand\includegraphics[2][]{%
    \GenericError{(gnuplot) \space\space\space\@spaces}{%
      Package graphicx or graphics not loaded%
    }{See the gnuplot documentation for explanation.%
    }{The gnuplot epslatex terminal needs graphicx.sty or graphics.sty.}%
    \renewcommand\includegraphics[2][]{}%
  }%
  \providecommand\rotatebox[2]{#2}%
  \@ifundefined{ifGPcolor}{%
    \newif\ifGPcolor
    \GPcolortrue
  }{}%
  \@ifundefined{ifGPblacktext}{%
    \newif\ifGPblacktext
    \GPblacktextfalse
  }{}%
  \let\gplgaddtomacro\g@addto@macro
  \gdef\gplbacktext{}%
  \gdef\gplfronttext{}%
  \makeatother
  \ifGPblacktext
    \def\colorrgb#1{}%
    \def\colorgray#1{}%
  \else
    \ifGPcolor
      \def\colorrgb#1{\color[rgb]{#1}}%
      \def\colorgray#1{\color[gray]{#1}}%
      \expandafter\def\csname LTw\endcsname{\color{white}}%
      \expandafter\def\csname LTb\endcsname{\color{black}}%
      \expandafter\def\csname LTa\endcsname{\color{black}}%
      \expandafter\def\csname LT0\endcsname{\color[rgb]{1,0,0}}%
      \expandafter\def\csname LT1\endcsname{\color[rgb]{0,1,0}}%
      \expandafter\def\csname LT2\endcsname{\color[rgb]{0,0,1}}%
      \expandafter\def\csname LT3\endcsname{\color[rgb]{1,0,1}}%
      \expandafter\def\csname LT4\endcsname{\color[rgb]{0,1,1}}%
      \expandafter\def\csname LT5\endcsname{\color[rgb]{1,1,0}}%
      \expandafter\def\csname LT6\endcsname{\color[rgb]{0,0,0}}%
      \expandafter\def\csname LT7\endcsname{\color[rgb]{1,0.3,0}}%
      \expandafter\def\csname LT8\endcsname{\color[rgb]{0.5,0.5,0.5}}%
    \else
      \def\colorrgb#1{\color{black}}%
      \def\colorgray#1{\color[gray]{#1}}%
      \expandafter\def\csname LTw\endcsname{\color{white}}%
      \expandafter\def\csname LTb\endcsname{\color{black}}%
      \expandafter\def\csname LTa\endcsname{\color{black}}%
      \expandafter\def\csname LT0\endcsname{\color{black}}%
      \expandafter\def\csname LT1\endcsname{\color{black}}%
      \expandafter\def\csname LT2\endcsname{\color{black}}%
      \expandafter\def\csname LT3\endcsname{\color{black}}%
      \expandafter\def\csname LT4\endcsname{\color{black}}%
      \expandafter\def\csname LT5\endcsname{\color{black}}%
      \expandafter\def\csname LT6\endcsname{\color{black}}%
      \expandafter\def\csname LT7\endcsname{\color{black}}%
      \expandafter\def\csname LT8\endcsname{\color{black}}%
    \fi
  \fi
    \setlength{\unitlength}{0.0500bp}%
    \ifx\gptboxheight\undefined%
      \newlength{\gptboxheight}%
      \newlength{\gptboxwidth}%
      \newsavebox{\gptboxtext}%
    \fi%
    \setlength{\fboxrule}{0.5pt}%
    \setlength{\fboxsep}{1pt}%
\begin{picture}(4294.08,2520.00)%
    \gplgaddtomacro\gplbacktext{%
      \csname LTb\endcsname%
      \put(462,1352){\makebox(0,0)[r]{\strut{}$50$}}%
      \put(462,2265){\makebox(0,0)[r]{\strut{}$100$}}%
      \put(677,300){\makebox(0,0){\scriptsize 2}}%
      \put(942,300){\makebox(0,0){\scriptsize 6}}%
      \put(1864,300){\makebox(0,0){\scriptsize 24}}%
      \put(2232,300){\makebox(0,0){\scriptsize 30}}%
      \put(2497,300){\makebox(0,0){\scriptsize 35}}%
      \put(3318,300){\makebox(0,0){\scriptsize 51}}%
      \colorrgb{0.00,0.00,1.00}%
      \put(916,2392){\makebox(0,0)[l]{\strut{}\shortstack{{\tiny Time}\\{\tiny per sample:}}}}%
      \put(1864,2447){\makebox(0,0){\scriptsize \shortstack{$\kappa=0.9$\\10.3 s}}}%
      \put(2497,2447){\makebox(0,0){\scriptsize \shortstack{$\kappa=0.5$\\18.1 s}}}%
      \put(3318,2447){\makebox(0,0){\scriptsize \shortstack{$\kappa=0.1$\\26.5 s}}}%
      \colorrgb{0.80,0.15,0.16}%
      \put(3841,2447){\makebox(0,0){\scriptsize \shortstack{\lcm{}\\34.8 s}}}%
      \put(889,121){\makebox(0,0)[l]{\strut{}1st LCM batch}}%
      \put(2420,121){\makebox(0,0)[l]{\strut{}2nd LCM batch}}%
    }%
    \gplgaddtomacro\gplfronttext{%
      \csname LTb\endcsname%
      \put(286,2470){\makebox(0,0){\scriptsize Samples}}%
      \put(4027,462){\makebox(0,0){\scriptsize \shortstack{Time,\\min.}}}%
    }%
    \gplbacktext
    \put(0,0){\includegraphics{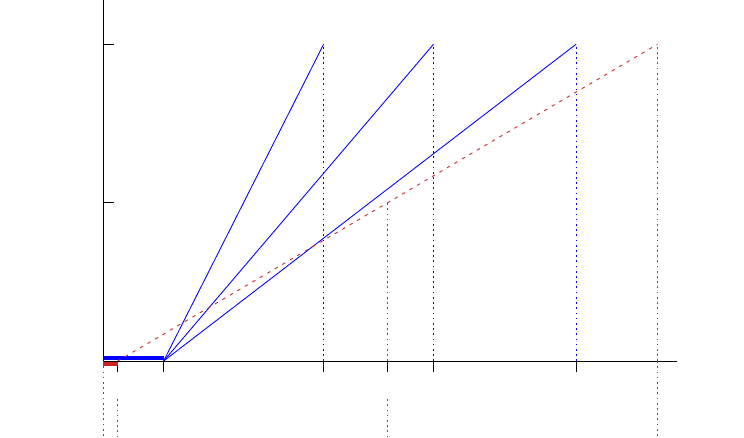}}%
    \gplfronttext
  \end{picture}%
\endgroup
 
a) Sampling runtime comparison
\end{minipage}%
\begin{minipage}[b]{0.3686\textwidth}
\centering %
\begingroup
  \makeatletter
  \providecommand\color[2][]{%
    \GenericError{(gnuplot) \space\space\space\@spaces}{%
      Package color not loaded in conjunction with
      terminal option `colourtext'%
    }{See the gnuplot documentation for explanation.%
    }{Either use 'blacktext' in gnuplot or load the package
      color.sty in LaTeX.}%
    \renewcommand\color[2][]{}%
  }%
  \providecommand\includegraphics[2][]{%
    \GenericError{(gnuplot) \space\space\space\@spaces}{%
      Package graphicx or graphics not loaded%
    }{See the gnuplot documentation for explanation.%
    }{The gnuplot epslatex terminal needs graphicx.sty or graphics.sty.}%
    \renewcommand\includegraphics[2][]{}%
  }%
  \providecommand\rotatebox[2]{#2}%
  \@ifundefined{ifGPcolor}{%
    \newif\ifGPcolor
    \GPcolortrue
  }{}%
  \@ifundefined{ifGPblacktext}{%
    \newif\ifGPblacktext
    \GPblacktextfalse
  }{}%
  \let\gplgaddtomacro\g@addto@macro
  \gdef\gplbacktext{}%
  \gdef\gplfronttext{}%
  \makeatother
  \ifGPblacktext
    \def\colorrgb#1{}%
    \def\colorgray#1{}%
  \else
    \ifGPcolor
      \def\colorrgb#1{\color[rgb]{#1}}%
      \def\colorgray#1{\color[gray]{#1}}%
      \expandafter\def\csname LTw\endcsname{\color{white}}%
      \expandafter\def\csname LTb\endcsname{\color{black}}%
      \expandafter\def\csname LTa\endcsname{\color{black}}%
      \expandafter\def\csname LT0\endcsname{\color[rgb]{1,0,0}}%
      \expandafter\def\csname LT1\endcsname{\color[rgb]{0,1,0}}%
      \expandafter\def\csname LT2\endcsname{\color[rgb]{0,0,1}}%
      \expandafter\def\csname LT3\endcsname{\color[rgb]{1,0,1}}%
      \expandafter\def\csname LT4\endcsname{\color[rgb]{0,1,1}}%
      \expandafter\def\csname LT5\endcsname{\color[rgb]{1,1,0}}%
      \expandafter\def\csname LT6\endcsname{\color[rgb]{0,0,0}}%
      \expandafter\def\csname LT7\endcsname{\color[rgb]{1,0.3,0}}%
      \expandafter\def\csname LT8\endcsname{\color[rgb]{0.5,0.5,0.5}}%
    \else
      \def\colorrgb#1{\color{black}}%
      \def\colorgray#1{\color[gray]{#1}}%
      \expandafter\def\csname LTw\endcsname{\color{white}}%
      \expandafter\def\csname LTb\endcsname{\color{black}}%
      \expandafter\def\csname LTa\endcsname{\color{black}}%
      \expandafter\def\csname LT0\endcsname{\color{black}}%
      \expandafter\def\csname LT1\endcsname{\color{black}}%
      \expandafter\def\csname LT2\endcsname{\color{black}}%
      \expandafter\def\csname LT3\endcsname{\color{black}}%
      \expandafter\def\csname LT4\endcsname{\color{black}}%
      \expandafter\def\csname LT5\endcsname{\color{black}}%
      \expandafter\def\csname LT6\endcsname{\color{black}}%
      \expandafter\def\csname LT7\endcsname{\color{black}}%
      \expandafter\def\csname LT8\endcsname{\color{black}}%
    \fi
  \fi
    \setlength{\unitlength}{0.0500bp}%
    \ifx\gptboxheight\undefined%
      \newlength{\gptboxheight}%
      \newlength{\gptboxwidth}%
      \newsavebox{\gptboxtext}%
    \fi%
    \setlength{\fboxrule}{0.5pt}%
    \setlength{\fboxsep}{1pt}%
\begin{picture}(2653.92,2520.00)%
    \gplgaddtomacro\gplbacktext{%
      \csname LTb\endcsname%
      \put(561,983){\makebox(0,0)[r]{\strut{}$3$}}%
      \put(561,2068){\makebox(0,0)[r]{\strut{}$9$}}%
      \put(797,220){\makebox(0,0){\strut{}$1$}}%
      \put(1214,220){\makebox(0,0){\strut{}$5$}}%
      \put(1630,220){\makebox(0,0){\strut{}$9$}}%
      \put(2046,220){\makebox(0,0){\strut{}$13$}}%
      \put(2463,220){\makebox(0,0){\strut{}$17$}}%
      \colorrgb{0.00,0.00,1.00}%
      \put(1422,2384){\makebox(0,0){\tiny Estimate$\times (1 + \varepsilon_{est})$}}%
      \put(1422,847){\makebox(0,0){\tiny Estimate$/ (1 + \varepsilon_{est})$}}%
      \colorrgb{1.00,0.00,0.00}%
      \put(16,1412){\makebox(0,0)[l]{\strut{}\shortstack{5.37\\{\tiny True count}}}}%
    }%
    \gplgaddtomacro\gplfronttext{%
      \csname LTb\endcsname%
      \put(385,2360){\makebox(0,0){\strut{}\shortstack{\scriptsize Itemsets\\\scriptsize (bln.)}}}%
      \put(2276,550){\makebox(0,0){\scriptsize Iterations}}%
    }%
    \gplbacktext
    \put(0,0){\includegraphics{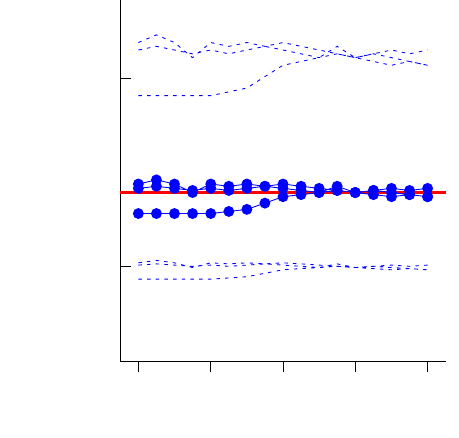}}%
    \gplfronttext
  \end{picture}%
\endgroup
 
b) Estimation accuracy
\end{minipage}

\caption{%
a) \algNameSpec{}
generates two batches of
50 samples faster than 
a sampler derived from \lcm{},
regardless of error tolerance.
b) \algNameSpec{}
with the $uniform$ quality
converges to a high-quality estimate
of the total number of itemsets
in a small number of iterations
(three different random seeds shown).
Practical error of
the estimation phase is 
substantially lower than 
theoretical bounds,
which indirectly signals
high sampling accuracy.}
\label{fig:accidents}
\end{figure}

\autoref{fig:accidents}
illustrates the results.
The counting mode of \lcm{} is
roughly $4.5$ minutes faster
than the estimation phase of
\algNameSpec{}.
Generating samples from 
the output of \lcm{},
on the other hand,
is considerably slower:
it takes approximately 35s
to sample one itemset, whereas
\algNameSpec{} takes 
from 10s to 27s per sample,
depending on error tolerance $\kappa$.
As a result, \algNameSpec{}
samples two batches
faster than \lcm{} regardless
of its parameter values. Moreover,
with $\kappa=0.9$ it samples
all $100$ itemsets even 
before the first batch is 
returned by~\lcm{}.

Thus, the proposed sampler
outperforms a sampler derived from 
an efficient itemset miner,
even though the experimental setup
favors the latter. First,
non-uniform weighted sampling 
would require more advanced
computations with itemsets,
which would increase the costs of
both counting and sampling with \lcm{}.
Second, \algNameSpec{} could also
benefit from the exact count
obtained by \lcm{} and
start sampling after $1.5$ minutes.
Third, the individual itemsets
sampled from the output of
an algorithm based on
deterministic search are not
\emph{exchangeable}.
Figure \ref{fig:accidents-batches}
illustrates this:
due to \lcm{}'s search order,
certain items only occur 
at the beginning of batches, 
while for \algNameSpec{},
the order within a batch is random.

The accuracy of \algName{}
in this scenario can be
evaluated indirectly,
by comparing the estimate of
the total number of itemsets
obtained at the estimation phase
with the actual number.
The error tolerance of
the estimation phase
is $\varepsilon_{est} = 0.8$
(see Appendix~\ref{sec:appendix:wg}
for details).
\autoref{fig:accidents}b
demonstrates that, in practice,
the error is substantially lower
than the theoretical bound.
Furthermore, 
$3$ to $9$ iterations suffice
to obtain an accurate estimate.
Similar to previous experiments,
accurate input
from the estimation phase
alleviates theoretical risks and
is expected to enable
accurate sampling.

\begin{figure}

\newcommand{\itemposimg}[5]{%
\begin{minipage}[b]{#2\textwidth}%
\centering #3 \smallskip

\includegraphics[width=#4,height=2cm,angle=180,origin=c]%
		{accidents-item_position-#1.png}%
		
#5
\end{minipage}%
}

{\centering \texttt{accidents}, $\fnc{minfreq}{0.009}$, $uniform$\par}

\begin{minipage}[b]{0.065\textwidth}
\centering {\scriptsize Items\\in \lcm{}\\search\\order}

\vspace{0.85cm}
\end{minipage}%
\itemposimg{expected}{0.195}{{\scriptsize Expected\\probability (0-0.5)}}{0.3cm}{\ }%
\itemposimg{lcm}{0.4}{\lcm{}}{3.6cm}{Sample index}%
\itemposimg{wg}{0.4}{\algNameSpec{}, $\kappa=0.9$}{3.6cm}{Sample index}

\caption{%
The probability of
observing a given item at 
a certain position in
a batch by \algNameSpec{} 
is close to the expected
probability of observing
this item in a random itemset, which 
indicates high sampling accuracy.
The samples by 
the \lcm{}-based sampler
are not exchangeable, 
i.e., certain items
are under- or oversampled
at certain positions in a batch,
depending on their position in
\lcm{}'s search order.}
\label{fig:accidents-batches}
\end{figure}

\begin{table}[t]
\caption{\algNameSpec{}
generates individual samples
considerably faster than \lcm{},
although it is slower in counting. 
The \texttt{kosarak} dataset
poses a significant challenge
to \algNameSpec{} due to
its number of items and sparsity
that complicate the propagation
of XOR constraints.}
\setlength{\tabcolsep}{3pt}
\centering \begin{tabular}{lcccc@{\hskip -1pt}c@{\hskip 10pt}cclcc}\toprule
                     &             &                &                       &           & \mulr{Itemsets,}{2} & \cencol{Counting, min}{2} & & \cencol{Sampling, s}{2}  \\
\cmidrule{7-8} \cmidrule{10-11}
                     & $\abs{\DB}$ & $\abs{\Items}$ & Density               & $\minsup$ & bln.                & \lcm{} & \algNameSpec{}   & & \lcm{}  & \algNameSpec{} \\
\midrule
\texttt{accidents}   & $340183$    & $469$          & $\hphantom{3}7.21\%$  & $0.009$   & $\hphantom{1}5.37$  & $1.55$ & $6.48$           & & $33.77$ & $10.30$  \\
\texttt{connect}     & $67557$     & $130$          &            $33.08\%$  & $0.178$   & $16.88$             & $0.01$ & $0.38$           & & $59.00$ & $0.37$   \\
\texttt{kosarak}     & $990002$    & $41271$        & $\hphantom{3}0.02\%$  & $0.042$   & $10.93$             & $4.87$ & $456.30$         & & $73.04$ & $294.89$ \\
\texttt{pumsb}       & $49046$     & $7117$         & $\hphantom{3}1.04\%$  & $0.145$   & $\hphantom{1}1.11$  & $0.09$ & $1.19$           & & $18.14$ & $0.75$   \\
\bottomrule
\end{tabular} \label{table:runtime-fimi}
\end{table}

Table~\ref{table:runtime-fimi}
summarizes the results.
On three out of four datasets,
\lcm{} is faster in counting
itemsets, but considerably slower
in generating individual samples,
which is even more pronounced
on \texttt{connect} and \texttt{pumsb}
than on \texttt{accidents}.
The results are opposite on
the \texttt{kosarak} dataset,
which is in line with
the theoretical expectations
(see Section~\ref{sec:algorithm}):
the large number of items and
the sparsity of the dataset
sharply increase the costs of
XOR constraint propagation.
As a result, enumeration
with \eclat{} within \algNameSpec{}
becomes considerably slower 
than with \lcm{} (augmenting
\lcm{} to handle XOR constraints
might provide a solution, but 
is challenging from 
an implementation perspective).

\parhead{Q4: Pattern set sampling}
In order to demonstrate
the flexibility of our approach and
the promised benefits of
weighted constrained pattern sampling, i.e.,
1) diversity and quality of results,
2) utility of constraints, and 3) the
potential for anytime exploration,
we here address the problem of
sampling non-overlapping $2$-tilings as
introduced in Section \ref{sec:set-sampling}.
We re-use the implementation of
\algNameGen{} from the
itemset sampling experiments,
only modifying the declarative 
specification of the CSP.
Likewise, we impose 
the FCL constraints
on constituent patterns.

\autoref{table:tiling-runtimes}
shows parameters and runtimes
for sampling $2$-tilings
proportional to $area$.
The time to sample 
a single $2$-tiling is 
suitable for pattern-based
data exploration, where
tilings are inspected by
a human user, as
it exceeds $5$s only on
the \texttt{german} dataset.
For several settings,
the estimation phase 
runtime slightly
exceeds the runtime of 
enumerating all solutions.
However, for the settings
with a large number of 
pattern sets, which are
arguably the primary target 
of pattern samplers,
the opposite is true.
For example, in 
the \texttt{vote} experiment
with $170$ million tilings,
the estimation phase runtime
only amounts to $8\%$ of 
the complete enumeration runtime,
which demonstrates
the benefits of
the proposed approach.

\begin{table}[t]

\newcommand{\tilingrtrow}[8]{%
	\texttt{#1} & $#2$ & $#8$ & $#3$ & \multicolumn{1}{r}{$#4$} & $#5$ & $#6$ & $#7$
}

\caption{Time required 
to sample a 2-tiling
is approximately 4s, 
which is suitable 
for anytime exploration.
Runtime benefits of 
the sampling procedure
are the largest for
the settings with
the largest tiling counts
(\texttt{kr-vs-kp},
\texttt{primary}, and 
\texttt{vote}).}

\setlength{\tabcolsep}{4.8pt}
\centering \begin{tabular}{lccccrrr}\toprule
 &                  &                  &                       &          &                                 & \cencol{Sampling with \algNameGen{}}{2} \\
\cmidrule{7-8}
 & \twor{$\minsup$} & \twor{$\minlen$} & Tilt                  & Tilings, & \cencol{Enumeration,}{1} & \cencol{Estimation,}{1} & \cencol{Per sample,}{1}    \\
 &                  &                  & bound $\bound{\tilt}$ & mln.     & \cencol{min}{1}          & \cencol{min}{1}         & \cencol{s}{1}\\
\midrule
\tilingrtrow{german-credit}{0.22}{25.4}{11.2} {8.2} {12.6}{15.3}{3} \\
\tilingrtrow{heart}        {0.30}{13.3}{2.2}  {1.0} {3.3} {3.9}{5}  \\
\tilingrtrow{hepatitis}    {0.26}{12.4}{7.2}  {1.9} {2.6} {3.6}{5}  \\
\tilingrtrow{kr-vs-kp}     {0.31}{13.1}{20.3} {18.5}{3.5} {5.1}{4}  \\
\tilingrtrow{primary}      {0.03}{50.3}{24.9} {5.5} {4.0} {4.5}{5}  \\
\tilingrtrow{vote}         {0.10}{15.3}{170.1}{37.0}{2.9} {4.4}{5}  \\
\bottomrule
\end{tabular}
\label{table:tiling-runtimes}
\end{table}

\begin{figure}[t]

\newcommand{\tilingimg}[1]{%
	\includegraphics[width=0.2715\textwidth,keepaspectratio,angle=270,origin=c]%
		{vote-tiling-#1-arranged.png}%
}

\begin{minipage}{0.3686\textwidth}
\centering%
\begin{tabular}{cc}
Tiling 1      & Tiling 2      \\
$area=1314$   & $966$         \\
\tilingimg{1} & \tilingimg{2} \\
Tiling 3      & Tiling 4      \\
$941$         & $878$         \\
\tilingimg{3} & \tilingimg{4} \\
Tiling 5      & Tiling 6      \\
$799$         & $765$         \\
\tilingimg{5} & \tilingimg{6} \\
\end{tabular}
\end{minipage}
\begin{minipage}{0.5964\textwidth}
\centering %
\begingroup
  \makeatletter
  \providecommand\color[2][]{%
    \GenericError{(gnuplot) \space\space\space\@spaces}{%
      Package color not loaded in conjunction with
      terminal option `colourtext'%
    }{See the gnuplot documentation for explanation.%
    }{Either use 'blacktext' in gnuplot or load the package
      color.sty in LaTeX.}%
    \renewcommand\color[2][]{}%
  }%
  \providecommand\includegraphics[2][]{%
    \GenericError{(gnuplot) \space\space\space\@spaces}{%
      Package graphicx or graphics not loaded%
    }{See the gnuplot documentation for explanation.%
    }{The gnuplot epslatex terminal needs graphicx.sty or graphics.sty.}%
    \renewcommand\includegraphics[2][]{}%
  }%
  \providecommand\rotatebox[2]{#2}%
  \@ifundefined{ifGPcolor}{%
    \newif\ifGPcolor
    \GPcolortrue
  }{}%
  \@ifundefined{ifGPblacktext}{%
    \newif\ifGPblacktext
    \GPblacktextfalse
  }{}%
  \let\gplgaddtomacro\g@addto@macro
  \gdef\gplbacktext{}%
  \gdef\gplfronttext{}%
  \makeatother
  \ifGPblacktext
    \def\colorrgb#1{}%
    \def\colorgray#1{}%
  \else
    \ifGPcolor
      \def\colorrgb#1{\color[rgb]{#1}}%
      \def\colorgray#1{\color[gray]{#1}}%
      \expandafter\def\csname LTw\endcsname{\color{white}}%
      \expandafter\def\csname LTb\endcsname{\color{black}}%
      \expandafter\def\csname LTa\endcsname{\color{black}}%
      \expandafter\def\csname LT0\endcsname{\color[rgb]{1,0,0}}%
      \expandafter\def\csname LT1\endcsname{\color[rgb]{0,1,0}}%
      \expandafter\def\csname LT2\endcsname{\color[rgb]{0,0,1}}%
      \expandafter\def\csname LT3\endcsname{\color[rgb]{1,0,1}}%
      \expandafter\def\csname LT4\endcsname{\color[rgb]{0,1,1}}%
      \expandafter\def\csname LT5\endcsname{\color[rgb]{1,1,0}}%
      \expandafter\def\csname LT6\endcsname{\color[rgb]{0,0,0}}%
      \expandafter\def\csname LT7\endcsname{\color[rgb]{1,0.3,0}}%
      \expandafter\def\csname LT8\endcsname{\color[rgb]{0.5,0.5,0.5}}%
    \else
      \def\colorrgb#1{\color{black}}%
      \def\colorgray#1{\color[gray]{#1}}%
      \expandafter\def\csname LTw\endcsname{\color{white}}%
      \expandafter\def\csname LTb\endcsname{\color{black}}%
      \expandafter\def\csname LTa\endcsname{\color{black}}%
      \expandafter\def\csname LT0\endcsname{\color{black}}%
      \expandafter\def\csname LT1\endcsname{\color{black}}%
      \expandafter\def\csname LT2\endcsname{\color{black}}%
      \expandafter\def\csname LT3\endcsname{\color{black}}%
      \expandafter\def\csname LT4\endcsname{\color{black}}%
      \expandafter\def\csname LT5\endcsname{\color{black}}%
      \expandafter\def\csname LT6\endcsname{\color{black}}%
      \expandafter\def\csname LT7\endcsname{\color{black}}%
      \expandafter\def\csname LT8\endcsname{\color{black}}%
    \fi
  \fi
    \setlength{\unitlength}{0.0500bp}%
    \ifx\gptboxheight\undefined%
      \newlength{\gptboxheight}%
      \newlength{\gptboxwidth}%
      \newsavebox{\gptboxtext}%
    \fi%
    \setlength{\fboxrule}{0.5pt}%
    \setlength{\fboxsep}{1pt}%
\begin{picture}(4680.00,3276.00)%
    \gplgaddtomacro\gplbacktext{%
      \csname LTb\endcsname%
      \put(528,712){\makebox(0,0)[r]{\strut{}$0.1$}}%
      \put(528,1799){\makebox(0,0)[r]{\strut{}$0.5$}}%
      \put(528,2615){\makebox(0,0)[r]{\strut{}}}%
      \put(764,110){\makebox(0,0){\strut{}\shortstack{440\\Min}}}%
      \put(986,110){\makebox(0,0){\strut{}}}%
      \put(1462,110){\makebox(0,0){\strut{}}}%
      \put(1768,110){\makebox(0,0){\strut{}\shortstack{828\\Median}}}%
      \put(2140,110){\makebox(0,0){\strut{}}}%
      \put(2999,110){\makebox(0,0){\strut{}}}%
      \put(4174,110){\makebox(0,0){\strut{}\shortstack{1758\\Max}}}%
      \put(986,345){\makebox(0,0){\tiny $1\%$}}%
      \put(1462,345){\makebox(0,0){\tiny $25\%$}}%
      \put(2140,345){\makebox(0,0){\tiny $75\%$}}%
      \put(3002,345){\makebox(0,0){\tiny $99\%$}}%
      \colorrgb{0.00,0.00,1.00}%
      \settowidth{\gptboxwidth}{\widthof{\textbf{$n$}}}
	\advance\gptboxwidth by 2\fboxsep
      \savebox{\gptboxtext}{\parbox[c][\totalheight+2\fboxsep]{\gptboxwidth}{\centering{\textbf{$n$}}}}
	\put(3248,2479){\colorbox{white}{\usebox{\gptboxtext}}}
	\settowidth{\gptboxwidth}{\usebox{\gptboxtext}}
	\advance\gptboxwidth by 2\fboxsep
	\put(3248,2479){\framebox[\gptboxwidth][c]{\usebox{\gptboxtext}}}
      \put(3507,2523){\makebox(0,0)[l]{\strut{}Tiling $n$}}%
      \colorrgb{1.00,0.00,0.00}%
      \put(3248,1528){\makebox(0,0)[l]{\strut{}\shortstack{Smooth\\histogram\\of $area$}}}%
    }%
    \gplgaddtomacro\gplfronttext{%
      \csname LTb\endcsname%
      \put(360,2397){\makebox(0,0){\strut{}\shortstack{{\scriptsize Tilings}\\{\scriptsize (mln.)}}}}%
      \put(4137,569){\makebox(0,0){\strut{}$area$}}%
      \put(2471,2945){\makebox(0,0){\strut{}\shortstack{$Area$ distribution of all non-overlapping $2$-tilings\\\texttt{vote}, $\fnc{minfreq}{0.1} \land closed \land \fnc{minlen}{5}$}}}%
      \colorrgb{0.00,0.00,1.00}%
      \settowidth{\gptboxwidth}{\widthof{\textbf{1}}}
	\advance\gptboxwidth by 2\fboxsep
      \savebox{\gptboxtext}{\parbox[c][\totalheight+2\fboxsep]{\gptboxwidth}{\centering{\textbf{1}}}}
	\put(3023,642){\makebox[0.5\width][r]{\colorbox{white}{\usebox{\gptboxtext}}}}
	\settowidth{\gptboxwidth}{\usebox{\gptboxtext}}
	\advance\gptboxwidth by 2\fboxsep
	\put(3023,642){\makebox[0.5\width][r]{\framebox[\gptboxwidth][c]{\usebox{\gptboxtext}}}}
      \colorrgb{0.00,0.00,1.00}%
      \settowidth{\gptboxwidth}{\widthof{\textbf{2}}}
	\advance\gptboxwidth by 2\fboxsep
      \savebox{\gptboxtext}{\parbox[c][\totalheight+2\fboxsep]{\gptboxwidth}{\centering{\textbf{2}}}}
	\put(2123,1833){\makebox[0.5\width][r]{\colorbox{white}{\usebox{\gptboxtext}}}}
	\settowidth{\gptboxwidth}{\usebox{\gptboxtext}}
	\advance\gptboxwidth by 2\fboxsep
	\put(2123,1833){\makebox[0.5\width][r]{\framebox[\gptboxwidth][c]{\usebox{\gptboxtext}}}}
      \colorrgb{0.00,0.00,1.00}%
      \settowidth{\gptboxwidth}{\widthof{\textbf{3}}}
	\advance\gptboxwidth by 2\fboxsep
      \savebox{\gptboxtext}{\parbox[c][\totalheight+2\fboxsep]{\gptboxwidth}{\centering{\textbf{3}}}}
	\put(2058,1014){\makebox[0.5\width][r]{\colorbox{white}{\usebox{\gptboxtext}}}}
	\settowidth{\gptboxwidth}{\usebox{\gptboxtext}}
	\advance\gptboxwidth by 2\fboxsep
	\put(2058,1014){\makebox[0.5\width][r]{\framebox[\gptboxwidth][c]{\usebox{\gptboxtext}}}}
      \colorrgb{0.00,0.00,1.00}%
      \settowidth{\gptboxwidth}{\widthof{\textbf{4}}}
	\advance\gptboxwidth by 2\fboxsep
      \savebox{\gptboxtext}{\parbox[c][\totalheight+2\fboxsep]{\gptboxwidth}{\centering{\textbf{4}}}}
	\put(1895,1292){\makebox[0.5\width][r]{\colorbox{white}{\usebox{\gptboxtext}}}}
	\settowidth{\gptboxwidth}{\usebox{\gptboxtext}}
	\advance\gptboxwidth by 2\fboxsep
	\put(1895,1292){\makebox[0.5\width][r]{\framebox[\gptboxwidth][c]{\usebox{\gptboxtext}}}}
      \colorrgb{0.00,0.00,1.00}%
      \settowidth{\gptboxwidth}{\widthof{\textbf{5}}}
	\advance\gptboxwidth by 2\fboxsep
      \savebox{\gptboxtext}{\parbox[c][\totalheight+2\fboxsep]{\gptboxwidth}{\centering{\textbf{5}}}}
	\put(1691,1170){\makebox[0.5\width][r]{\colorbox{white}{\usebox{\gptboxtext}}}}
	\settowidth{\gptboxwidth}{\usebox{\gptboxtext}}
	\advance\gptboxwidth by 2\fboxsep
	\put(1691,1170){\makebox[0.5\width][r]{\framebox[\gptboxwidth][c]{\usebox{\gptboxtext}}}}
      \colorrgb{0.00,0.00,1.00}%
      \settowidth{\gptboxwidth}{\widthof{\textbf{6}}}
	\advance\gptboxwidth by 2\fboxsep
      \savebox{\gptboxtext}{\parbox[c][\totalheight+2\fboxsep]{\gptboxwidth}{\centering{\textbf{6}}}}
	\put(1603,1713){\makebox[0.5\width][r]{\colorbox{white}{\usebox{\gptboxtext}}}}
	\settowidth{\gptboxwidth}{\usebox{\gptboxtext}}
	\advance\gptboxwidth by 2\fboxsep
	\put(1603,1713){\makebox[0.5\width][r]{\framebox[\gptboxwidth][c]{\usebox{\gptboxtext}}}}
    }%
    \gplbacktext
    \put(0,0){\includegraphics{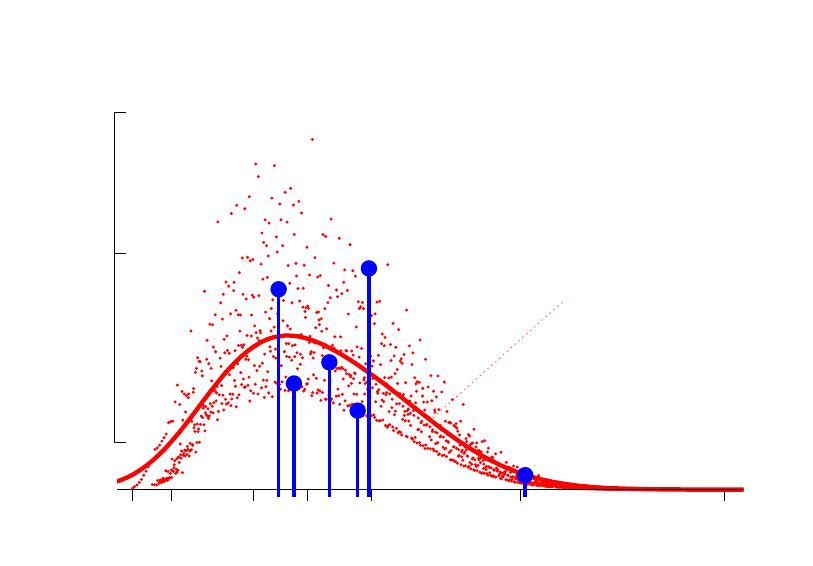}}%
    \gplfronttext
  \end{picture}%
\endgroup
 \end{minipage}
\caption{\textbf{Left}:
Six 2-tilings 
sampled consecutively from 
the \texttt{vote} dataset.
The tilings are diverse, i.e.,
cover different regions in the data,
a property essential for
pattern-based data exploration.
(Note that while 
the sampled tilings are 
fair random draws,
the images are not random:
the tilings were sorted by
$area$ descending, and
items and transactions were
re-arranged so that
the cells covered by 
tilings with larger $area$ are
as close to each other as possible.)
\textbf{Right}: Qualities
($area$) of the samples,
indicated by vertical bars,
tend towards a dense region
between the 25th and
the 75th percentile.}
\label{fig:tilings-vote}
\end{figure}

The left part of
\autoref{fig:tilings-vote} shows
six random $2$-tilings
sampled from the \texttt{vote} dataset.
Constraints ensure
that the individual tiles
comprising each $2$-tiling
do not overlap,
simplifying interpretation.
Moreover, the \emph{set} of 
tilings is diverse, i.e.,
the tilings are 
dissimilar to each other.
They cover different regions 
in the data, revealing 
alternative structural regularities.

The right part of
\autoref{fig:tilings-vote} shows 
the $area$ distribution
of all $2$-tilings that
satisfy the constraints,
obtained by complete enumeration.
Qualities of 5 out of 6 tilings
fall in the dense region
between the $25$th and 
$75$th percentile,
indicating high
sampling accuracy.
This is completely
expected from the 
problem statement.
In practice,
pattern quality measures, 
like $area$, are only
an approximation of
application-specific
pattern interestingness,
thus diversity of results is
a desirable characteristic of
a pattern sampler as long as
the quality of
individual patterns is
sufficiently high.
To sample patterns 
from the right tail (i.e.,
with exceptionally high qualities)
more frequently,
the sampling task
could be changed, e.g.,
either by choosing another 
sampling distribution
or by enforcing 
constraints on $area$.

\section{Discussion}%
\label{sec:discuss}

The experiments demonstrate 
that \algName{} delivers 
the promised benefits:
1) it is flexible in that 
it supports a wide range of
pattern constraints and
sampling distributions 
in itemset mining as well as
the novel pattern set sampling task;
2) it is anytime in that
the time it takes to generate
random patterns is suitable for 
online data exploration, including
the settings with large datasets or
large solution spaces; and
3) by virtue of high sampling accuracy
in all supported settings,
sampled patterns are diverse,
i.e., originate from different regions
in the solution space.
The theoretical guarantees 
ensure that the empirical observations
extend reliably beyond 
the studied settings.
Furthermore, practical accuracy 
is substantially higher 
than theory guarantees.
The results confirm that
pattern mining can benefit
from the latest advances in AI,
particularly in weighted
constrained sampling for SAT.
In this section, we discuss
potential applications,
advantages, and limitations
of the proposed approach.

The primary application of
pattern sampling involves
showing sampled patterns
directly to the user.
In exploratory data analysis,
the mining task is often
ill-defined, i.e.,
the quality measure and
the constraints reflect
the application-specific
pattern interestingness
only approximately \cite{Carvalho2005}.
Owing to its flexibility,
\algName{} allows experimenting
with various task formulations
using the same algorithm.
Pattern sampling allows
obtaining diverse and
representative sets of patterns
in an anytime manner.
These properties are particularly 
important in \emph{interactive 
mining systems}, which aim at 
returning patterns that are 
\emph{subjectively} interesting 
to the current user.
Boley et al. \cite{Boley2013} used 
two-step samplers in such a system, while 
Dzyuba and van Leeuwen \cite{Dzyuba2017}
proposed to learn low-tilt
subjective quality measures
specifically for sampling with 
\algName{}.

Furthermore, the theoretical guarantees
enable applications beyond
displaying the sampled patterns:
\algName{} can be plugged 
into algorithms that use 
patterns as building blocks 
for pattern-based models, 
yielding anytime versions thereof 
with $\left(\varepsilon,\delta\right)$-%
approximation guarantees of their own 
derived from \algName{}' guarantees.
Example approaches include 
community detection
with \eclat{} \cite{Berlingerio2013}
or outlier detection with
two-step sampling \cite{Giacometti2016}.
The authors note that the formulation 
of the mining task has 
a strong influence on the results
in the respective applications.
\algName{} allows the algorithm designer 
to experiment with these choices
and thus to obtain variants of 
these approaches, perhaps with
better application performance.

The flexibility also provides
algorithmic advantages.
In addition to being agnostic of 
the quality measure $\Qual$ and 
the constraint set $\Constraints$,
\algName{} is also agnostic of 
the underlying solution space 
and the oracle, as long as 1) 
solutions can be encoded with
binary variables and 2)
the oracle supports XOR constraints.
Thus, \algName{} provides 
a principled method to convert
a pattern enumeration algorithm
into a sampling algorithm,
which amounts to implementing
the mechanism to handle 
XOR constraints.
This allows re-using 
algorithmic advances in
pattern mining for developing 
pattern samplers, 
which we accomplished with
\cpfim{} and \eclat{}.

Most importantly, \algName{}' 
black-box nature simplifies 
extensions to new pattern languages.
For example, possible extensions 
of \algNameGen{} cover a variety 
of pattern set languages 
in Guns et al. \cite{Guns2013}, 
e.g., conceptual clustering.
\algNameSpec{} can be 
extended to sample other 
binary pattern languages, 
e.g., association rules 
\cite{Agrawal1996} or 
redescriptions \cite{Ramakrishnan2004}.
In contrast, MCMC algorithms, 
like LRW, are based on local 
neighbourhood enumeration, which
is uncommon in traditional
pattern mining techniques, and 
thus require distinctive
design and implementation
principles for novel problems.

On the other hand,
\algName{} only supports
pattern languages that
can be compactly represented
with binary variables, such as
the itemsets and pattern sets
studied in this paper.
This essentially limits it
to propositional discrete 
(binary, categorical, or
discretized numeric) data.
While in principle
structured pattern languages,
e.g., sequences or graphs,
could also be modeled
using this framework, 
the number of variables
would rise sharply,
which would negatively
affect performance.
Devising hashing-based 
sampling algorithms for 
non-binary domains
is an open problem.
In particular, 
sequence mining
can be encoded with 
integer variables
\cite{Kemmar2014};
generalized XOR constraints
\cite{Gomes2007} is one 
possible research direction.
Alternatively, as 
the \algname{m4ri} library 
\cite{M4RI} that 
we base our implementation on
is optimized for \emph{dense}
$\mathbb{F}_2$ matrices,
certain performance issues 
may be addressed with 
Gaussian elimination algorithms
optimized for sparse matrices
\cite{Bouillaguet2016}.

Another limitation concerns
the bounded tilt assumption
regarding sampling distributions:
many common quality measures,
e.g., $\chi^2$, \emph{information gain}
\cite{Nijssen2009}, or
\emph{weighted relative accuracy}
\cite{Lemmerich2013}, have high or 
even effectively infinite tilts
(if $\Qual$ can be 
arbitrarily close to $0$).
Such quality measures
could be tackled with
divide-and-conquer approaches
\cite[Section 6]{Chakraborty2014}
or alternative estimation techniques
\cite{Ermon2013b}.
This requires the capacity 
to efficiently handle 
constraints of the form
$a \leq \fnc{\Qual}{\Pattern} \leq b$,
which is possible for 
a number of quality measures,
including the ones listed above.
\section{Conclusion} \label{sec:conclusion}

We proposed \algName{},
a flexible pattern sampler
with theoretical guarantees
regarding sampling accuracy.
We leveraged the perspective
on pattern mining as 
a constraint satisfaction problem
and developed the first pattern
sampling algorithm that
builds upon the latest advances 
in sampling solutions in SAT.
Experiments show that
\algName{} delivers 
the promised benefits
regarding flexibility, 
efficiency, and sampling accuracy 
in itemset mining as well as in
the novel task of pattern set sampling
and that it is competitive with 
state-of-the-art alternatives.

Directions for future work 
include extensions to 
richer pattern languages and
relaxing assumptions regarding
sampling distributions
(see Section~\ref{sec:discuss}
for a discussion).
Specializing the sampling procedure 
towards typical mining scenarios
may allow for deriving 
tighter theoretical bounds and
improving the practical performance;
examples include specific 
constraint types (e.g., anti-/monotone),
shapes of sampling distributions
(e.g., right-peaked distributions,
similar to Figure~\ref{fig:tilings-vote}),
and iterative mining.
Following the future developments 
in weighted constrained sampling
in AI may provide insights
for improving various 
aspects of \algName{} or
pattern sampling in general.

\parhead{Acknowledgements}
The authors would like 
to thank Guy Van~den~Broeck 
for useful discussions 
and Martin Albrecht 
for  the support with 
the \algname{m4ri} library.
Vladimir Dzyuba is supported
by FWO-Vlaanderen.
 	
	\bibliographystyle{unsrt}

	\appendix
\section{WeightGen}\label{sec:appendix:wg}

In this section, we present
an extended technical 
description of 
the \wg{} algorithm,
which closely follows
Sections 3 and 4 in 
\cite{Chakraborty2014},
whereas the pseudocode
in Algorithm \ref{alg:wg}
is structured similarly to
that of \algname{UniGen2},
a close cousin of \wg{}
\cite{Chakraborty2015}.
Lines \ref{line:cnt-start}-\ref{line:cnt-end}
correspond to the estimation phase and
Lines \ref{line:gen-start}-\ref{line:gen-end}
correspond to the sampling phase.
\algname{SolveBounded}
stands for the bounded
enumeration oracle.

The parameters of
the estimation phase 
are fixed to particular
theoretically motivated values.
$pivot_{est}$ denotes
the maximal weight of
a cell at the estimation phase;
$pivot_{est}=46$ corresponds to
estimation error tolerance
$\varepsilon_{est}=0.8$
(Line \ref{line:param-pivot}).
If the total weight of 
solutions in a given cell
exceeds $pivot_{est}$,
a new random XOR constraint
is added in order to
eliminate a number of solutions.
Repeating the process
for a number of iterations
increases the confidence of
the estimate, e.g.,
$17$ iterations result
in $1-\delta_{est}=0.8$
(Line \ref{line:cnt-start}).
Note that \algname{Estimate}
essentially estimates
the total weight of
\emph{all} solutions,
from which $N_{XOR}$, 
the initial number of
XOR constraints for 
the sampling phase, is derived
(Line \ref{line:gen-start}).

A similar procedure is 
employed at the sampling phase.
It starts with $N_{XOR}$ constraints and
adds at most \emph{three} 
extra constraints.
The user-chosen 
error tolerance parameter $\kappa$
determines the range
$\range{loThresh}{hiThresh}$,
within which the total weight of
a suitable cell should lie
(Line \ref{line:param-range}).
For example, $\kappa=0.9$
corresponds to range $\range{6.7}{49.4}$.
If a suitable cell can be obtained,
a solution is sampled exactly
from all solutions in the cell;
otherwise, no sample is returned.
Requiring the total cell weight
to exceed a particular value
ensures the lower bound on
the sampling accuracy.

The preceding presentation
makes two simplifying assumptions:
(1) all weights lie in $\range{1/\tilt}{1}$;
(2) adding XOR constraints
never results in 
unsatisfiable subproblems
(empty cells).
The former is relaxed by
multiplying pivots by
$\bound{\weight}_{max} = \bound{\weight}_{min} \times \bound{\tilt} < 1$,
where $\bound{\weight}_{min}$ is
the smallest weight observed so far.
The latter is solved by
simply restarting an iteration
with a newly generated set of constraints.
See Chakraborty et al.
\cite{Chakraborty2014}
for the full explanation,
including the precise formulae
to compute all parameters.

\parhead{Implementation details}
Following suggestions of 
Chakraborty et al.
\cite{Chakraborty2015},
we implement \emph{leapfrogging},
a technique that improves
the performance of
the umbrella sampling procedure
and thus benefits both
\algNameGen{} and \algNameSpec{}.
First, after three iterations
of the estimation phase,
we initialize the following iterations
with a number of
XOR constraints that is equal to 
the smallest number returned in 
the previous iterations
(rather than with zero XORs).
Second, in the sampling phase,
we start with one
XOR constraint more than
the number suggested by theory.
If the cell is too small,
we remove one constraint;
if it is too large, we proceed adding 
(at most two) constraints.
Both modifications are based on
the observation that
theoretical parameter values 
address hypothetical corner cases 
that rarely occur in practice.
Finally, we only run
the estimation phase until
the initial number of
XOR constraints, which 
only depends on the median of 
total weight estimates,
converges. For example, 
if the estimation phase is
supposed to run for
17 iterations,
the convergence can
happen as early as
after 9 iterations.

\begin{algorithm}
\caption{\algname{WeightGen} \cite{Chakraborty2014}}
\begin{algorithmic}[1]
\Require Boolean formula $F$, 
	weight $\weight$, 
	tilt bound $\bound{\tilt}$,
	sampling error tolerance parameter $\kappa$
\Ensure $\fnc{\weight}{\cdot} \in \range{1/\bound{\tilt}}{1}$,
	bounded enumeration algorithm \algname{SolveBounded}%
\vspace{1pt}
\For{$17$ iterations} \Comment{Corresponds to $\delta_{est} = 0.2$}%
		\label{line:cnt-start}
	\State $WeightEstimates \overset{Add}{\leftarrow}$
		\Call{Estimate}{$\emptyset$}%
		\label{line:cnt-estimate}
\EndFor
\State $TotalWeight =$ \Call{Median}{$WeightEstimates$}
	\label{line:cnt-end}
\State $N_{XOR} = \bigoh{\log_2 TotalWeight / \left(1 + \kappa^{-1}\right)}$
	\label{line:gen-start}
\State $loThresh \propto \left(1+\kappa\right)/\kappa^2$, $hiThresh \propto \left(1+\kappa\right)^3/\kappa^2$
	\label{line:param-range}
\For{$N_{samples}$ times}
	\State $InitXORs$ = \{\Call{RandomXOR}{}() $\times N_{XOR}$ times\}
	\State \Call{Generate}{$\kappa$, $\range{loThresh}{hiThresh}$,
		$InitXORs$, $3$}
		\label{line:gen-end}
\EndFor
\Statex
\Function{Estimate}{$XORs$}
	\Statex $\triangleright$ Returns an estimate of the total weight of all solutions
	\State $pivot_{est} = 46$ \Comment{Corresponds to $\varepsilon_{est}=0.8$}
		\label{line:param-pivot}
	\State $Sols \leftarrow$ \Call{SolveBounded}{$F$, $XORs$, $pivot_{est}$}
	\State $CellWeight \leftarrow \sum_{s \in Sols}{\fncapp{\weight}{s}}$
	\If{$CellWeight \leq pivot_{est}$} \Comment{Cell of the ``right'' size}
		\State \Return $CellWeight \times 2^{\abs{XORs}}$ 
	\Else \Comment{Shrink cell by adding XOR constraint}
		\State \Call{Estimate}{$XORs\ \cup$ \textsc{RandomXOR}()}%
			\label{line:cnt-restart}
	\EndIf
\EndFunction
\Statex
\Function{Generate}{$\kappa$, $\range{lT}{hT}$, $XORs$, $i$}
	\Statex $\triangleright$ Returns a random solution of $F$
	\State $Sols \leftarrow$ \Call{SolveBounded}{$F$, $XORs$, $hT$}
	\State $CellWeight \leftarrow \sum_{s \in Sols}{\fncapp{\weight}{s}}$%
		\label{line:gen-cell}
	\If{$CellWeight \in \range{lT}{hT}$} \Comment{Cell of the ``right'' size}
		\State \Return \Call{SampleExactly}{$Sols$, $\weight$}
	\ElsIf{$CellWeight > lT \land i > 0$} \Comment{Cell is too large}
		\State \Call{Generate}%
			{$\kappa$, $\range{lT}{hT}$, $XORs \cup \text{\textsc{RandomXOR}()}$, $i-1$}
	\Else  \Comment{Cell is too small}
		\State \Return $\bot$
	\EndIf
\EndFunction
\end{algorithmic} \label{alg:wg}
\end{algorithm}
 \end{document}